\begin{document}

\title{Stochastic Contextual Bandits with Known Reward Functions}
\author{Pranav Sakulkar and Bhaskar Krishnamachari \\
Ming Hsieh Department of Electrical Engineering \\ 
Viterbi School of Engineering \\
University of Southern California, Los Angeles, CA, USA \\
{\tt \{sakulkar, bkrishna\}@usc.edu } }
\date{\today}

\maketitle

\begin{abstract}
Many sequential decision-making problems in communication networks such as power allocation in energy harvesting communications, mobile computational offloading, and dynamic channel selection can be modeled as contextual bandit problems which are natural extensions of the well-known multi-armed bandit problem. In these problems, each resource allocation or selection decision can make use of available side-information such as harvested power, specifications of the jobs to be offloaded, or the number of packets to be transmitted. In contextual bandit problems, at each step of a sequence of trials, an agent observes the side information or context, pulls one arm and receives the reward for that arm. We consider the stochastic formulation where the context-reward tuples are independently drawn from an unknown distribution in each trial. The goal is to design strategies for minimizing the expected cumulative regret or reward loss compared to a distribution-aware genie. We analyze a setting where the reward is a known non-linear function of the context and the chosen arm's current state which is the case with many networking applications. This knowledge of the reward function enables us to exploit the obtained reward information to learn about rewards for other possible contexts. We first consider the case of discrete and finite context-spaces and propose DCB($\e$), an algorithm that yields regret which scales logarithmically in time and linearly in the number of arms that are not optimal for any context. This is in contrast with existing algorithms where the regret scales linearly in the total number of arms. Also, the storage requirements of DCB($\e$) do not grow with time. DCB($\e$) is an extension of the UCB1 policy for the standard multi-armed bandits to contextual bandits using sophisticated proof techniques for regret analysis. We then study continuous context-spaces with Lipschitz reward functions and propose CCB($\e, \d$), an algorithm that uses DCB($\e$) as a subroutine. CCB($\e, \d$) reveals a novel regret-storage trade-off that is parametrized by $\d$. Tuning $\d$ to the time horizon allows us to obtain sub-linear regret bounds, while requiring sub-linear storage. Joint learning for all the contexts results in regret bounds that are unachievable by any existing contextual bandit algorithm for continuous context-spaces. Similar performance bounds are also shown to hold for unknown horizon case by employing a doubling trick. 
\end{abstract}

\begin{IEEEkeywords}
Contextual bandits, multi-armed bandits (MABs), online learning.
\end{IEEEkeywords}

\section{Introduction}
Many problems in networking such as dynamic spectrum allocation in cognitive radio networks involve sequential decision making under the face of uncertainty. The multi-armed bandit problem (MAB, see \cite{lai1985, agrawal1995, auer2002, vakili2013}), a fundamental online learning framework, has been previously applied in networks for multi-user channel allocation \cite{gai2012comb} and distributed opportunistic spectrum access \cite{gai2014dist}. These problems also appear widely in online advertising and clinical trials. In the standard MAB version, an agent is presented with a sequence of trials where in each trial it has to choose an arm from a set of $K$ arms, each providing stochastic rewards over time with unknown distributions. The agent receives a payoff or reward based on its action in each trial. The goal is to maximize the total expected reward over time. On one hand, different arms need to be explored often enough to learn their rewards and on the other hand, prior observations need to be exploited to maximize the immediate reward. MAB problems, therefore, capture the fundamental trade-off between exploration and exploitation that appear widely in these decision making problems.

The contextual bandit problem considered in this paper is a natural extension of the basic MAB problem where the agent can see a context or some hint about the rewards in the current trial that can be used to make a better decision. This makes contextual bandits more general and more suitable to various practical applications such as ad placements \cite{langford2008epoch, lu2010lipschitz} and personalized news recommendations \cite{li2010}, since settings with no context information are rare in practice. As we show in this work, contextual bandits are also applicable to many problems arising in wireless communications and networking.

In this paper, we consider contextual bandit problems where the relationship between the contexts and the rewards is known, which we argue is the case with many network applications. In this case, the reward information revealed after an arm-pull for a certain context can also be used to learn about rewards for other contexts as well, since the reward function is already known. The arm-pulls for one context thus help in reducing the exploration requirements of other contexts. Performance of any bandit algorithm is measured in terms of its regret, defined as the difference between the expected cumulative rewards of the Bayes optimal algorithm and that of the agent. Since the bulk of the regret in bandit problems is due to the exploration of different arms, the knowledge of reward function helps in reducing the regret substantially. As previous bandit algorithms cannot exploit this knowledge, we develop new contextual bandit algorithms for this setting. Also, the previously studied contextual bandit problems either assume linear reward mapping \cite{li2010, auer2003linrel, chu2011linucb, agrawal2012thompson} or do not make explicit assumptions about the exact structure of reward functions \cite{langford2008epoch, dudik2011randUCB, agarwal2014monster, lu2010lipschitz, slivkins2014similarity}. It must, however, be noted that the network applications may often involve non-linear reward functions and thus warrant a new approach. The goal of this paper is to study these contextual bandits and develop efficient algorithms for them. 

\subsection{Motivating Examples}
We present some examples of problems arising in communication networks where a contextual bandit formulation is applicable. In each case, there are a few key ingredients: a set of \emph{arms} (typically corresponding to resources such as channels or servers) with associated \emph{attributes} or performance measures (such as throughput capacity, channel gain, processing rate) evolving stochastically with an unknown distribution, and a \emph{context} known to the user, such that the \emph{reward} obtained at each step is a known (possibly non-linear) function of the context and the random attribute of the arm selected in that step. 

\begin{figure}
    \centering
    \includegraphics[width=0.80\textwidth]{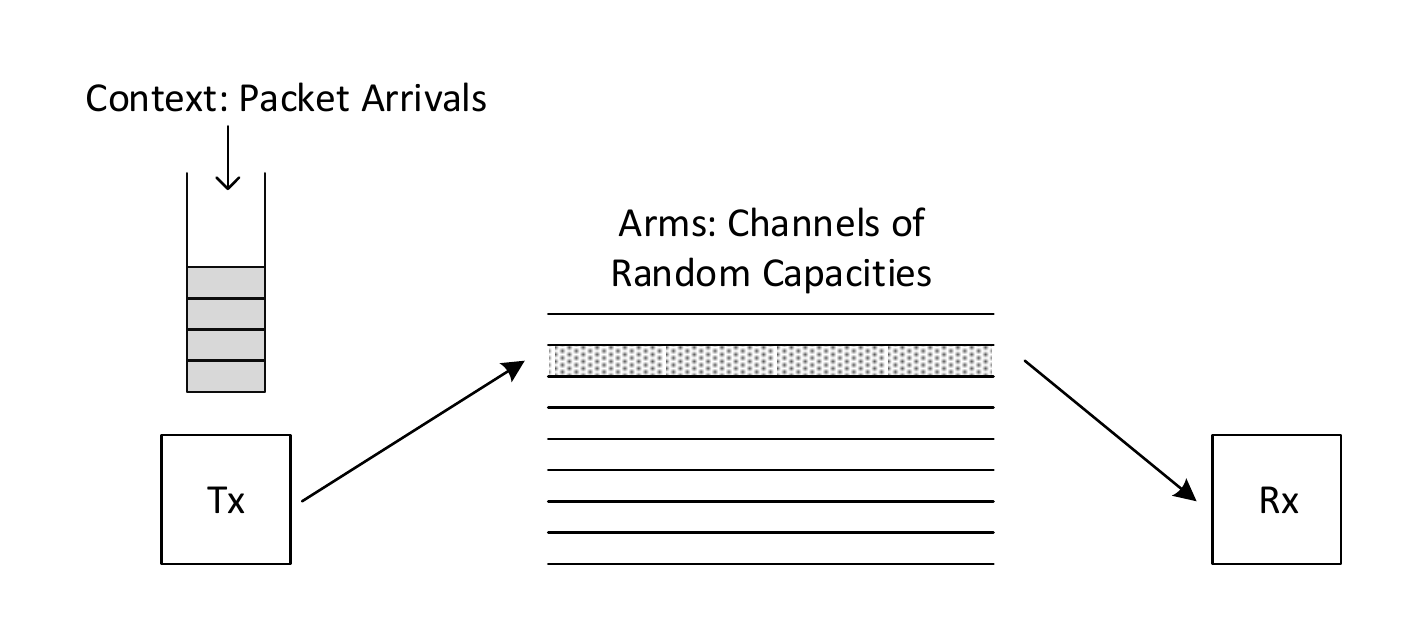}
    \captionsetup{justification=justified,singlelinecheck=false}
    \caption{Dynamic channel selection problem.}
    \label{fig:prob1}
\end{figure}

\begin{figure}
    \centering
    \includegraphics[width=0.80\textwidth]{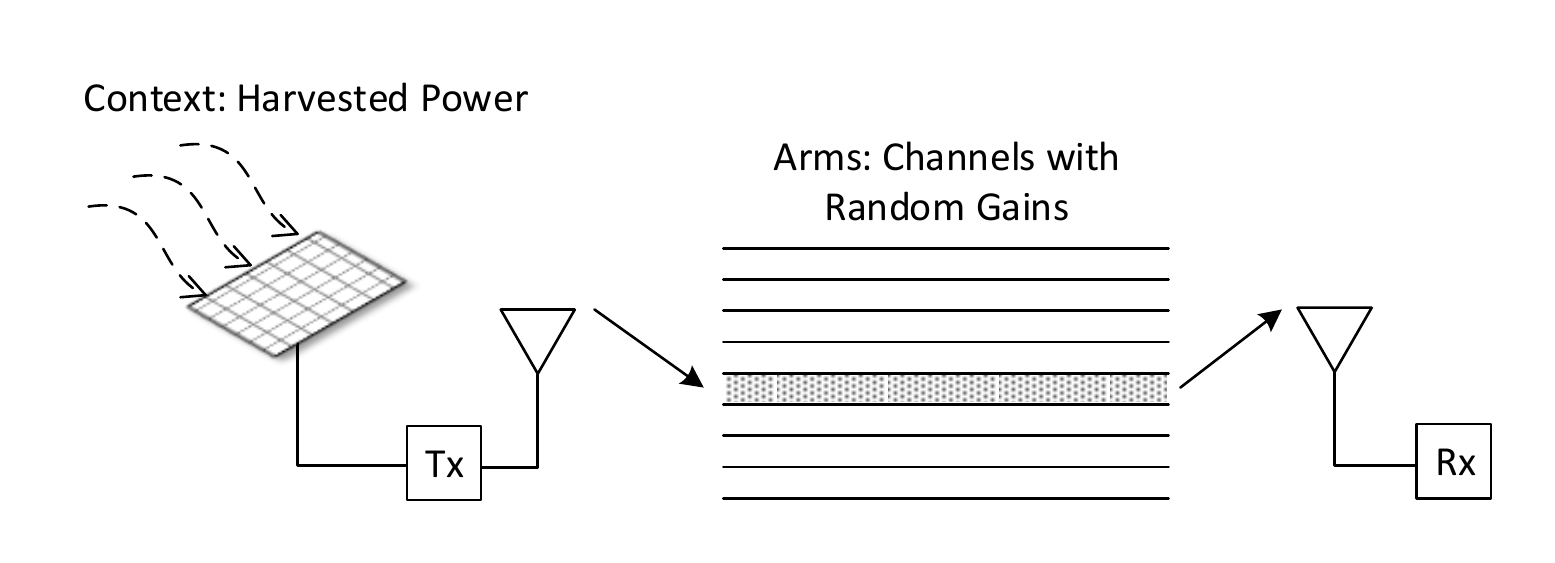}
    \captionsetup{justification=justified,singlelinecheck=false}
    \caption{Power-aware channel selection in energy harvesting communications.}
    \label{fig:prob2}
\end{figure}

\subsubsection{Dynamic Channel Selection} \label{app:ch_sel}
The problem of selecting the best one of several capacity-limited channels every slot based on the available number data packets can be modeled as a contextual bandit problem as shown in figure \ref{fig:prob1}. Here, arms correspond to the different channels available, arm-values or attributes are the instantaneous capacities of these channels which can be assumed to vary stochastically over time (e.g., due to fading or interference in the case of wireless networks) and the context is the number of data-bits that need to be transmitted during the slot. The reward representing the number of successfully transmitted bits over the chosen channel can be expressed as
\begin{equation}
g(y, x) = \min \left\{ y, x \right\},
\end{equation}
where $y$ is the number of data-bits to be transmitted, $x$ is the capacity of the channel. In this problem, channels are to be selected sequentially over the trials based on the context information. Assuming no prior information about the channels, the goal in this problem is to maximize the number of successfully transmitted bits over time. When the number of bits to be sent at each time are finite, this represents the case of discrete contextual bandits considered in this paper.

\begin{figure}
    \centering
    \includegraphics[width=0.80\textwidth]{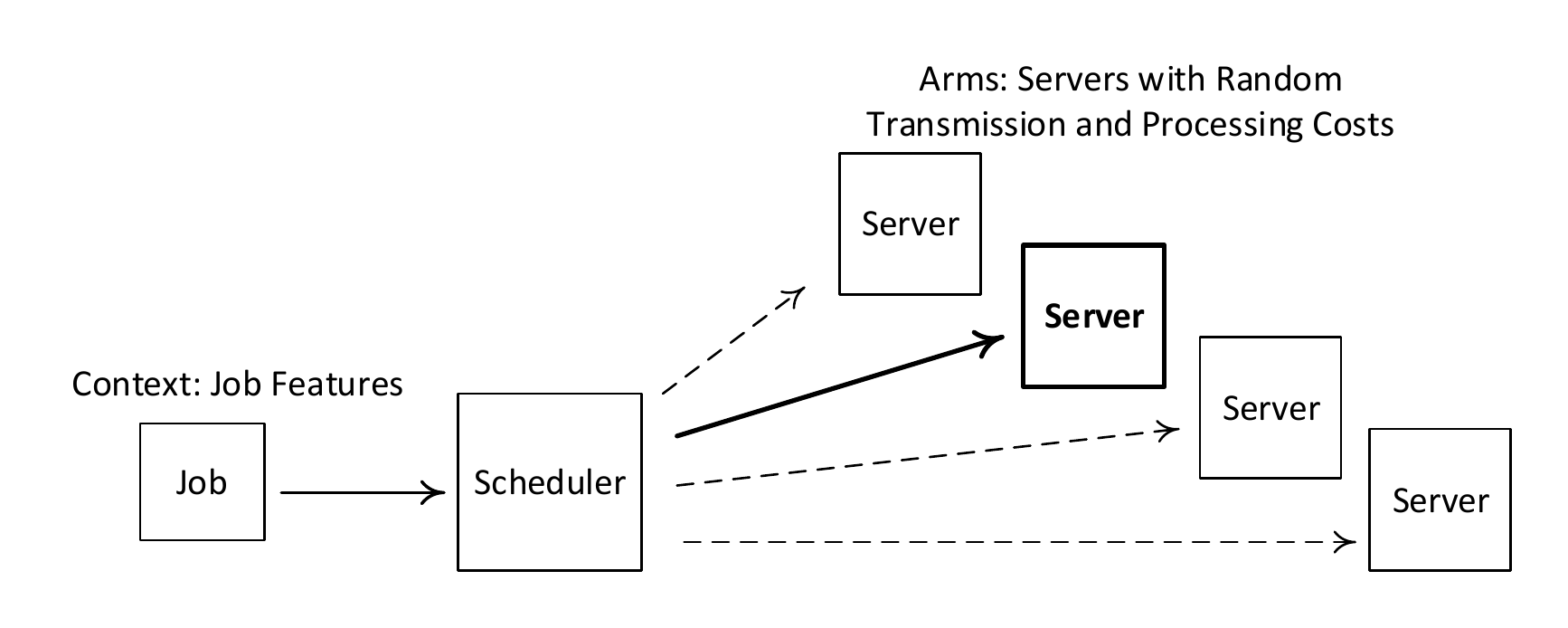}
    \captionsetup{justification=justified,singlelinecheck=false}
    \caption{Job-aware computational offloading problem.}
    \label{fig:prob3}
\end{figure}

\subsubsection{Energy Harvesting Communications} \label{app:harvest}
Consider a power-aware channel selection problem in energy harvesting communications shown in figure \ref{fig:prob2}. In every slot, the current harvested power available for transmission $p$ is known to the transmitter as context. Based on this context information, a channel is to be selected sequentially in each slot for transmission. The reward function in this case is the power-rate function, which can be assumed to be proportional to the AWGN channel capacity $\log(1+ p x)$, where $x$ denotes the instantaneous channel gain-to-noise ratio (the arm attribute) varying stochastically over time. The goal of this problem is to maximize the sum-rate over time. Note that the harvested power $p$ can even be a continuous random variable, and $\log(1+ px)$ is Lipschitz continuous in $p$ for all $x \geq 0$. This represents the case of continuous contextual bandits considered in the paper.

\subsubsection{Multi-Objective Optimization in Computational Offloading}
Computational offloading involves transferring job-specific data to a remote server and further computations on the server. These problems involve rewards that could be functions of multiple objectives such as transmission latency and processing time required on the server. For each server, the reward would depend on various attributes such as the data-rate of the transmission link to the server and the current effective processing rate of the server, which could be unknown stochastic processes, as well as specific features of each job such as the data transmission and processing requirements which act as the context for this problem. As shown in figure \ref{fig:prob3}, this problem of sequential server allocation for various jobs to maximize the cumulative reward over time can thus be formulated as a suitable contextual bandit problem. 

\subsection{Contributions}
We first analyze the case of discrete and finite context-spaces and propose a policy called discrete contextual bandits or DCB($\e$) that requires $O(MK)$ storage, where $M$ is the number of distinct contexts and  $K$ is the number of arms. DCB($\e$) uses upper confidence bounds similar to UCB1, a standard stochastic MAB algorithm from \cite{auer2002} and yields a regret that grows logarithmically in time and linearly in the number of arms not optimal for any context. A key step in our analysis to use a novel proof technique to prove a constant upper bound on the regret contributions of the arms which are optimal for some context. In doing so, we also prove a high probability bound for UCB1 on the number of pulls of the optimal arm in the standard MAB problem. This high probability bound can be independently used in other bandit settings as well. Further, we use the standard MAB asymptotic lower bound results from \cite{lai1985} to show the order optimality of DCB($\e$). Note that DCB($\e$) outperforms UCB1 and Multi-UCB, a discretized version of the Query-ad-clustering algorithm from \cite{lu2010lipschitz}. Regret of UCB1 scales linearly with time as it ignores the context information completely. Multi-UCB uses the context information to run separate instances of UCB1 for each context, but is unable to exploit the reward function knowledge. Regret of Multi-UCB, therefore, grows logarithmically in time, but unlike DCB($\e$) it scales linearly in the total number of arms. 

For continuous context-spaces with Lipschitz rewards, we propose a novel algorithm called continuous contextual bandits or CCB($\e,\d$) that quantizes the context-space and uses DCB($\e$) over the quantized space as a subroutine. Our analysis of CCB($\e, \d$) reveals an interesting trade-off between the storage requirement and the regret, where desired performance can be achieved by tuning the parameter $\d$ to the time horizon. Decreasing the quantization step-size increases the storage requirements while decreasing the regret. Thus by exploiting the reward-function knowledge for joint learning across contexts, CCB($\e, \d$) is able to obtain regrets even smaller than the lower bound of $\Omega(T^{2/3})$ from \cite{lu2010lipschitz} for continuous contextual bandits without such knowledge. For the case of unknown time horizon, we employ a doubling technique that provides similar regret guarantees while using the same amount of storage. CCB($\e, \d$), therefore, empowers the system designer by giving control over $\d$ to tune the regret performance.

\subsection{Organization}
This paper is organized as follows. We first formulate the general contextual bandit problem in section \ref{sec:prob}. In section \ref{sec:discrete}, we discuss various algorithmic approaches to discrete contextual bandits and propose our DCB($\e$) algorithm. Regret analysis of this algorithm is performed in section \ref{sec:analysis} followed by a discussion on the regret lower bound for the discrete case. We extend the algorithm to continuous contexts and analyze the storage vs regret trade-off in section \ref{sec:conti}. Results of numerical simulations are presented in section \ref{sec:simu}. Finally, we summarize our contributions and discuss some avenues for future work in section \ref{sec:conclusion}. The detailed proofs of the theorems are provided in appendices at the end of this paper.

\section{Related Work}
Lai and Robbins \cite{lai1985} wrote one of the earliest papers on the stochastic MAB problem and provided an asymptotic lower bound of $\Omega(K \log T)$ on the expected regret for any bandit algorithm. In \cite{agrawal1995}, sample-mean based upper confidence bound (UCB) policies are presented that achieve the logarithmic regret asymptotically. In \cite{auer2002}, several variants of the UCB based policies, including UCB1, are presented and are proved to achieve the logarithmic regret bound uniformly over time for arm-distributions with finite support. 

Contextual bandits extend the standard MAB setting by providing some additional information to the agent about each trial. In the contextual bandits studied in \cite{li2010, auer2003linrel, chu2011linucb, agrawal2012thompson}, the agent observes feature vectors corresponding to each arm. The expected reward of the chosen arm is assumed to be a linear function of its corresponding context vector. LinRel \cite{auer2003linrel} and LinUCB \cite{chu2011linucb} use the context features to estimate the mean reward and its corresponding confidence interval for each arm. The arm that achieves the highest UCB which is the sum of mean reward and its confidence interval is pulled. These confidence intervals are defined in such a way that the expected reward lies in the confidence interval around its estimated mean with high probability. In LinRel and LinUCB, mean and covariance estimates of the linear function parameters are used to estimate the upper confidence bounds for the arm rewards. LinUCB algorithm has also been empirically evaluated on Yahoo!'s news recommendation database in \cite{li2010}. An algorithm based on Thompson sampling is proposed in \cite{agrawal2012thompson} for bandits with linear rewards. In this Bayesian approach, the agent plays an arm based on its posterior probability of having the best reward. The posterior probabilities are then updated according to the reward obtained after each pull. LinRel and LinUCB achieve a regret bound of $\tilde{O}(\sqrt{T})$ while the Thompson sampling algorithm achieves a bound of $\tilde{O}(\sqrt{T^{1+\e}})$ for any $\e > 0$, where $T$ is the time horizon. Since these algorithms consider linear functions and store estimates of the parameters characterizing the rewards, their storage requirements do not increase with time.

More general contextual bandits that do not assume any specific relation between the context and reward vectors are studied in \cite{langford2008epoch, dudik2011randUCB, agarwal2014monster}. The epoch-greedy algorithm \cite{langford2008epoch} separates the exploration and exploitation steps and partitions the trials in different epochs. Each epoch starts with one exploration step followed by several exploitation steps. It stores the history of contexts, actions and rewards for the exploration steps and requires $O(T^{2/3})$ storage. Its regret is also upper bounded by $\tilde{O}(T^{2/3})$. RandomizedUCB \cite{dudik2011randUCB}, however, does not make any distinction between exploration and exploitation. At each trial it solves an optimization problem to obtain a distribution over all context-action mappings and samples an arm as the action for the trial. Its regret bound is $\tilde{O}(\sqrt{T})$, but it needs $O(T)$ storage as it stores the history of all trials. ILTCB \cite{agarwal2014monster} is a more efficient algorithm that solves an optimization problem only on a pre-specified set of trials. However, it still needs $O(T)$ storage. Since these general contextual bandit algorithms do not make any assumption on the context-reward mapping, they store the raw history of trials and do not summarize the information into meaningful statistics. Storage requirements for these approaches, therefore, increase with time.

Contextual bandits in general metric spaces are studied in \cite{lu2010lipschitz, slivkins2014similarity} under the assumption of Lipschitz reward function. This formulation is also able to handle the case of continuous contexts unlike the previously discussed general contextual bandit problems. Query-ad-clustering algorithm \cite{lu2010lipschitz} partitions the metric spaces in uniform clusters, whereas the meta-algorithm from \cite{slivkins2014similarity} considers adaptive partitions and uses a general bandit algorithm as a subroutine. Upper confidence indexes are evaluated knowing the corresponding clusters of incoming contexts, the clusters-wise empirical means of previously obtained rewards and the number of pulls for each cluster. Regret bounds for these approaches involve the packing and covering dimensions of the context and arm spaces.

Recently, contextual bandits with budget and time constraints have been studied in \cite{badan2014, wu2015ccb}. Resourceful contextual bandits from \cite{badan2014} consider a general setting with random costs associated with the arms, a continuous context-space and multiple budget constraints. An algorithm called Mixture\_Elimination that achieves $O(\sqrt{T})$ regret is proposed for this problem. A simplified model with fixed costs, discrete and finite contexts, exactly one time constraint and one budget constraint is considered in \cite{wu2015ccb}. For this model, an algorithm called UCB\_ALP achieving $O(\log T)$ regret is proposed. In these problem formulations, time and budget constraints also affect the regular exploration-exploitation trade-off.

\section{Problem Formulation} \label{sec:prob}
In the general stochastic contextual bandit problem, there is a distribution $D$ over $(\bfy, \bfr)$, where $\bfy \in \sY$ is a context vector in context space $\sY$ and $\bfr \in \bbR^{K}$ is the reward vector containing entries corresponding to each arm. The problem is a repeated game such that at each trial $t$, an independent sample $(\bfy_{t}, \bfr_{t})$ is drawn from $D$, context vector $\bfy_{t}$ is presented to the agent. Once the agent chooses an arm $a_{t} \in \{1, \cdots, K\}$, it receives a reward $r_{a,t}$ which is the $a_{t}$-th element of the vector. A contextual bandit algorithm $\sA$ chooses an arm $a_{t}$ to pull at each trial $t$, based on the previous history of contexts, actions, their corresponding rewards: $(\bfy_{1}, a_{1}, r_{a,1}),\cdots, (\bfy_{t-1}, a_{t-1}, r_{a,t-1})$ and the current context $\bfy_{t}$. The goal is to maximize the total expected reward $\sum_{t=1}^{T} \bbE_{(\bfy_t, \bfr_t) \sim D} \left[ r_{a,t} \right]$. 

Consider a set $\sH$ consisting of hypotheses $h:\sY \rightarrow \{1, \cdots, K\}$. Each hypothesis maps the context $\bfy$ to an arm $a$. The expected reward of a hypothesis $h$ is expressed as
\begin{equation}
    R(h) = \bbE_{(\bfy, \bfr) \sim D} \left[ r_{h(\bfy)} \right].
\end{equation}
The Bayes optimal hypothesis $h^{*}$ refers to the hypothesis with maximum expected reward $h^{*} = \underset{h\in \sH}{\argmax}\ R(h)$ and maps contexts as $h^{*}(\bfy) = \underset{a \in \{1,\cdots, K\}}{\argmax} \bbE \left[ r_{a} \mid \bfy \right]$. Agent's goal is to choose arms sequentially and compete with the hypothesis $h^{*}$. Regret of the agent's algorithm $\sA$ is defined as the difference in expected reward accumulated by the best hypothesis $h^{*}$ and that by its algorithm over different trials, which can be expressed as
\begin{equation}
    \mathfrak{R}_{\sA} (n) = n R(h^{*}) - \bbE\left[ \sum_{t=1}^{n} r_{\sA(t),t} \right], \label{eq:def_regret}
\end{equation}
where $\sA(t)$ denote the arm pulled by algorithm $\sA$ in $t$-th trial. Note that the expectation in (\ref{eq:def_regret}) is over the sequence of random samples from $D$ and any internal randomization potentially used by $\sA$. The reward maximization problem is equivalently formulated as a regret minimization problem. Smaller the regret $\mathfrak{R}_{\sA} (n)$, better is the policy. We would like the regret to be sublinear with respect to time $n$ so that the time-averaged regret will converge to zero. These policies are asymptotically Bayes optimal, since $\underset{n\rightarrow 0}{\lim} \frac{\mathfrak{R}_{\sA} (n)}{n} = 0$.

Let $x_{a,t}$ denote the state or the value of $a$-th arm at $t$-th slot and $\sX$ denote the set of all possible states of an arm. We assume that the reward is a known bounded function of the selected arm's state and the context represented as
\begin{equation}
r_{a,t} = g(\bfy_{t}, x_{a,t}), \label{eq:def_reward}
\end{equation}
where $ \abs{g (\bfy,x)} \leq B$ for some $B > 0$ and $\forall \bfy \in \sY, x \in \sX$. Note that, in the standard MAB problems, reward of an arm is same as its state. We assume semi-bandit feedback, which means only the state of the agent's chosen arm is revealed. If the reward is an injective function of the arm state for all contexts, then the arm state can be inferred without ambiguity from the reward observation, since the current context is already known to the user. Thus, from now on we assume that the chosen arm's state is either revealed by semi-bandit feedback or inferred from the reward observation. In this problem formulation, the context space $\sY$ can be either discrete or continuous. We first develop an algorithm called DCB($\e$) for discrete context spaces in section \ref{sec:discrete} and later propose its extension CCB($\e, \d$) for continuous context spaces in section \ref{sec:conti} assuming that the reward function is Lipschitz with respect to the context variable.

\section{Discrete Contexts} \label{sec:discrete}
In this section, we consider discrete and finite context space $\sY = \{ \bfy_{(1)},\cdots, \bfy_{(M)} \}$ with $M$ elements. As a convention, we use $j$ and $k$ to index the arms and $i$ to index the discrete contexts.

\subsection{Naive Approach}
In the standard MAB setting, the reward of an arm is same as its value and all trials are stochastically identical. UCB1 policy, proposed by Auer \textit{et al.} in \cite{auer2002}, pulls the arm that maximizes $ \overline{X}_k + \sqrt{2 \ln n / m_k}$ at $n$-th slot, where $\overline{X}_k$ is the mean observed value of the $k$-th arm and $m_k$ is the number of pulls of $k$-th arm. The second term is the size of the one-sided confidence interval for the empirical mean within which the actual expected reward value falls with high probability. Applying such standard MAB algorithms directly to our problem will result in poor performance, as they ignore the context-reward relationship. Playing the arm with highest mean value is not necessarily a wise policy, since the trials are not all identical. When every context in $\sY$ appears with a non-zero probability, playing the same arm over different trials implies that a suboptimal arm is pulled with a non-zero probability at each trial. The expected regret, therefore, grows linearly over time. Although the naive approach is not directly applicable, it serves as one of the benchmarks to compare our results against.

\subsection{Multi-UCB}
Main drawback of the plain UCB1 policy is that it ignores the contexts completely. One way to get around this problem is to run a separate instance of UCB1 for each context. Each UCB1 can be tailored to a particular context, where the goal is to learn the arm-rewards for that context by trying different arms over time. UCB1 instance for context $\bfy_{i}$ picks the arm that maximizes $\overline{g_{i,k}} + \sqrt{2 \ln n_{i} / m_{i,k}}$,  where $\overline{g_{i,k}}$ denotes the empirical mean of the rewards obtained from previous $m_{i,k}$ pulls of $k$-th arm for context $\bfy_{i}$ and $n_{i}$ denotes the number of occurrences of that context till the $n$-th trial. Since each instance incurs a regret that is logarithmic in time, the overall regret of Multi-UCB1 is also logarithmic in time. Compared to the naive approach where the regret is linear in time, this is a big improvement. Notice that Multi-UCB learns the rewards independently for each context and does not exploit the information revealed by the pulls for other contexts. Unless a single arm is optimal for all contexts, every arm's contribution to the regret is logarithmic in time, since they are explored independently for all contexts. Hence the regret scales as $O(M K \log n)$.

\subsection{New Policy}
Let $\th_{i,j}$ be the expected reward obtained by pulling the $j$-th arm for context $\bfy_{(i)}$ which is evaluated as
\begin{equation}
\th_{i,j} = \bbE\left[ r_{j} \mid \bfy_{(i)} \right] = \bbE \left[g(\bfy_{(i)}, x_{j}) \right].
\end{equation}
Let the corresponding optimal expected reward be $\th_{i}^{*} = \th_{i,h^{*}(\bfy_{(i)})}$. We define $G_{i}$, the maximum deviation in the rewards, as 
\begin{equation}
    G_{i} = \underset{x \in \sX}{\sup}\ g(\bfy_{(i)}, x ) - \underset{x \in \sX}{\inf}\ g(\bfy_{(i)}, x ).
\end{equation}
Since the reward function is assumed to be bounded over the domain of arm values, $G_{i} \leq 2 B$.

Similar to Multi-UCB, the idea behind our policy is to store an estimator of $\th_{i,j}$ for every context-arm pair. For these statistics, we compute upper confidence bounds similar to UCB1 \cite{auer2002} depending on the number of previous pulls of each arm and use them to decide current action. Since the selected arm can be observed or inferred irrespective of the context, it provides information about possible rewards for other contexts as well. This allows joint learning of arm-rewards for all contexts. Intuitively, this policy should be at least as good as Multi-UCB. Our policy, shown in algorithm \ref{algo:DCB_espilon}, is called as Discrete Contextual Bandit or DCB($\e$). In table \ref{tab:notations}, we summarize the notations used.

\begin{algorithm}
\caption{DCB($\e$)}
\label{algo:DCB_espilon}
\begin{algorithmic}[1]
\STATE  {\bf Parameters:} $\e > 0$.
\STATE  {\bf Initialization:} $(\hat{\th}_{i,j})_{M\times K} = (0)_{M\times K}, (m_{j})_{1\times K} = (0)_{1\times K}$.
\FOR {$n = 1$ to $K$} 
\STATE Set $j = n$ and pull $j$-th arm;
\STATE Update $\hat{\th}_{i,j}$, $\forall i: 1 \leq i \leq M$, as $\hat{\th}_{i,j} = g(\bfy_{(i)}, x_{j,n})$ and $m_{j}$ as $m_{j} = 1$;
\ENDFOR
\STATE // MAIN LOOP
\WHILE {1}
\STATE $n = n +1$;
\STATE Given the context $\bfy_{n} = \bfy_{(i)}$, pull the arm $\sA(n)$ that solves the maximization problem 
\begin{equation}
\sA(n) = \argmax\limits_{j \in \sS} \left\{ \hat{\th}_{i,j} + G_{i} \sqrt{\frac{(2+\e) \ln n}{m_{j}}} \right\};
\end{equation}
\STATE Update $\hat{\th}_{i,j}$ and $m_{j}$, $\forall i: 1 \leq i \leq M$ as
\begin{align}
\hat{\th}_{i,j}(n) &= \left\{ 
\begin{array}{l l}
\frac{\hat{\th}_{i,j}(n-1) m_{j}(n-1) + g(\bfy_{(i)},x_{j,n})}{m_{j}(n-1)+1}, & \text{if } j = \sA(n)  \\
\hat{\th}_{i,j}(n-1), & \text{else}
\end{array}
\right. \label{eq:update_theta}\\
m_{j}(n) &= \left\{ 
\begin{array}{l l}
m_{j}(n-1)+1, & \text{if } j = \sA(n)  \\
m_{j}(n-1); & \text{else}
\end{array}
\right. \label{eq:update_m} 
\end{align} 
\ENDWHILE
\end{algorithmic}
\end{algorithm} 

\begin{table}[htbp]\caption{Notations for Algorithm Analysis}
\centering
\begin{tabular}{l c p{10cm} }
\toprule
$K$ &:& number of arms \\
$M$ &:& number of distinct contexts (for discrete contexts)\\
$\sY$ &:& set of all contexts \\
$\sX$ &:& support set of the arm-values \\
$*$ &:& index to indicate the contextually optimal arm \\
$p_{i}$ &:& probability of the context being $\bfy_{(i)}$ (for discrete contexts) \\
$\sY_{j}$ &:& set of all contexts for which $j$-th arm is optimal \\
$q_{j}$ &:& sum probability of all contexts in $\sY_{j}$ \\

$\sS$ &:& index set of all arms \\
$\sO$ &:& index set of all optimal arms \\
$\overline{\sO}$ &:& index set of all non-optimal arms \\
$T_{j}^{O}(n)$ &:& number of \textit{optimal} pulls of $j$-th arm in $n$ trials \\
$T_{j}^{N}(n)$ &:& number of \textit{non-optimal} pulls of $j$-th arm in $n$ trials \\
$T_{j}(n)$ &:& number of pulls of $j$-th arm in $n$ trials (Note:  $T_{j}(n) = T_{j}^{O}(n) + T_{j}^{N}(n)$)\\
$\sA(n)$ &:& index of the arm pulled in $n$-th trial by DCB($\e$) \\

\bottomrule
\end{tabular}
\label{tab:notations}
\end{table}

Algorithmically, DCB($\e$) is similar in spirit to UCB1. It differs from UCB1 in updating multiple reward averages after a single pull. Also, the reward-ranges $G_{i}$ for various contexts are in general different and better regret constants are obtained by scaling the confidence bounds accordingly. The major difference, however, comes from the parameter $\e$ which needs to be strictly positive. This condition is essential for bounding the number of pulls of the arms that are optimal for some context. The analysis for these arms is sophisticated and differs significantly from the standard analysis of UCB1, and is therefore one of the main contributions of our paper.

DCB($\e$) uses an $M \times K$ matrix $(\hat{\th}_{i,j})_{M\times K}$ to store the reward information obtained from previous pulls. $\hat{\th}_{i,j}$ is the sample mean of all rewards corresponding to context $\bfy_{(i)}$ and the observed $x_{j}$ values. In addition, it uses a length $K$ vector $(m_{j})_{1 \times K}$ to store the number of pulls of $j$-th arm up to the current trial. At the $n$-th trial, $\sA(n)$-th arm is pulled and $x_{\sA(n),n}$ is revealed or inferred from the reward. Based on this, $(\hat{\th}_{i,j})_{M\times K}$ and $(m_{i})_{1 \times K}$ are updated. It should be noted that the time indexes in (\ref{eq:update_theta}) and (\ref{eq:update_m}) are only for notational clarity. It is not necessary to store the previous matrices while running the algorithm. Storage required by the DCB($\e$) policy is, therefore, only $\Theta(MK)$ and does not grow with time. In section \ref{sec:analysis}, we analyze and upper bound the cumulative regret of our policy and show that it scales logarithmically in time and linearly in the number of arms not optimal for any context.

\section{Regret Analysis of DCB($\e$)} \label{sec:analysis}
In the standard MAB problems, regret arises when the user pulls the non-optimal arms. Hence the regret upper bounds can be derived by analyzing the expected number of pulls of each non-optimal arm. In our problem, however, there can be multiple contextually optimal arms. It is, therefore, important to analyze not only the number of times an arm is pulled but also the contexts it is pulled for.

Let $\sS = \{1,2,..., K\} $ denote the index set of all arms. Let us define an optimal set $\sO$ and a non-optimal set $\overline{\sO}$ as
\begin{align}
\sO &= \{ j |\ \exists \bfy_{(i)} \in \sY: h^{*} (\bfy_{(i)}) = j \}, \nn \\
\overline{\sO} & = \sS \backslash \sO.
\end{align}
Note that the arms in the non-optimal set $\overline{\sO}$ are not optimal for any context. This means that every time an arm in $\overline{\sO}$ is pulled, it contributes to the expected regret. However, an arm in $\sO$ contributes to the expected regret only when it is pulled for contexts with different optimal arms. When these optimal arms are pulled for the contexts for which they are optimal, they don't contribute to the expected regret. We analyze these two cases separately and provide upper bounds on the number of pulls of each arm for our policy. 

\begin{theorem}[Bound for non-optimal arms] \label{thm:non_opt_T_UB}
For all $j \in \overline{\sO}$, under the DCB($\e$) policy 
\begin{equation}
\bbE \left[ T_{j}(n) \right] \leq \frac{4 (2+\e)\ln n}{\min\limits_{1 \leq i \leq M} \left(\D_{j}^{(i)} \right)^2} + 1 + \frac{\pi^2}{3},
\end{equation}
where $\D_{j}^{(i)} = \th_{i}^{*} - \th_{i,j}$.
\end{theorem}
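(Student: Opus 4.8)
The plan is to mirror the classical UCB1 pull-count argument of Auer et al., the one genuinely new ingredient being that all $M$ contexts share a single counter $m_j$. Fix $j \in \overline{\sO}$, so that $\D_j^{(i)} > 0$ for every context. Since each arm is pulled once in the initialization loop, $\bbE[T_j(n)] \leq 1 + \bbE\sum_{t=K+1}^{n}\mathbf{1}\{\sA(t)=j\}$. First I would introduce a counter threshold $\ell$ and use the standard splitting $\sum_t \mathbf{1}\{\sA(t)=j\} \leq \ell + \sum_t \mathbf{1}\{\sA(t)=j,\ m_j(t-1)\geq \ell\}$, choosing $\ell$ large enough that, once $m_j(t-1)\geq\ell$, a pull of $j$ can be forced only by a rare estimation error.

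To pin down $\ell$, condition on the context $\bfy_{(i)}$ at trial $t$ and let $*=h^*(\bfy_{(i)})$ be its optimal arm. If $\sA(t)=j$ then $j$'s index beats that of $*$, which forces at least one of the usual three events: (a) the optimal arm is underestimated, $\hat{\th}_{i,*}\leq \th_i^* - G_i\sqrt{(2+\e)\ln t/m_*}$; (b) arm $j$ is overestimated, $\hat{\th}_{i,j}\geq \th_{i,j}+G_i\sqrt{(2+\e)\ln t/m_j}$; or (c) the confidence radius still exceeds the gap, $\D_j^{(i)} < 2G_i\sqrt{(2+\e)\ln t/m_j}$. The only departure from UCB1 is that the confidence radius carries the context-dependent range $G_i$, which is precisely what keeps Hoeffding clean.

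Now the two pieces. Because the samples of $g(\bfy_{(i)}, x_j)$ occupy an interval of width at most $G_i$, Hoeffding's inequality bounds each of events (a) and (b) by $t^{-2(2+\e)}$. Event (c) is deterministic in $m_j$ and fails as soon as $m_j \geq 4 G_i^2(2+\e)\ln t/(\D_j^{(i)})^2$. Here is the one new point relative to UCB1: every pull of $j$, regardless of which context triggered it, advances the same counter $m_j$, so (c) fails simultaneously for all contexts once $m_j$ exceeds the largest per-context threshold. I would therefore set $\ell = \max_i 4(2+\e)\ln n/(\D_j^{(i)})^2 = 4(2+\e)\ln n/\min_i (\D_j^{(i)})^2$ (up to the $G_i^2$ range prefactor absorbed by the $G_i$-scaled confidence bound); this is exactly the source of the $\min$ in the denominator of the claimed bound.

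With $\ell$ so chosen only events (a) and (b) can trigger further pulls, and summing their probabilities over $m_*, m_j \leq t$ and over $t$ gives a tail $\sum_{t\geq 1} 2 t^2 \cdot t^{-2(2+\e)} = 2\sum_{t\geq 1} t^{-2-2\e} \leq \pi^2/3$; adding the initialization pull yields the additive $1+\pi^2/3$. I expect the main obstacle to be not any single estimate but handling the coupling through the shared counter: one must justify replacing the per-context stopping thresholds by their maximum and check that the double union bound over $m_*$ and $m_j$ is absorbed by the $t^{-2(2+\e)}$ decay so the series converges. The strict positivity of $\e$, indispensable in the companion analysis of the optimal arms, is not needed here, since $\e = 0$ already gives convergence.
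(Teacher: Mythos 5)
Your proposal is correct and follows essentially the same route as the paper's proof in Appendix B: the same counter-threshold splitting, the same conditioning on the realized context with the three-event decomposition (optimal arm underestimated, arm $j$ overestimated, confidence radius exceeding the gap), the same Hoeffding bounds of order $t^{-2(2+\e)}$, the same choice of threshold $\ell$ driven by $\min_{i} \D_{j}^{(i)}$ because the shared counter $m_{j}$ kills condition (c) for all contexts at once, and the same $\pi^{2}/3$ tail sum. Your parenthetical caveat about the $G_{i}^{2}$ prefactor in the threshold is a fair observation (the paper's own computation silently drops it), and your remark that $\e>0$ is not needed for this theorem matches the paper's Remark 1.
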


\begin{theorem}[Bound for optimal arms] \label{thm:opt_T_UB}
For all $j \in \sO$, under the DCB($\e$) policy 
\begin{equation}
\bbE \left[ T_{j}^{N}(n) \right] \leq n_{o} + \left( \sum\limits_{\bfy_{(i)} \in \sY_{j}} \frac{2}{p_{i}^{2}} \right) + 2^{3+2\e}K^{4+2\e} c_{\e} \left( \sum_{\bfy_{(i)} \in \sY_{i}} \frac{1}{\left(p_{i}\right)^{2+2\e}} \right) +  \frac{\pi^{2}}{3} \left( \sum\limits_{\bfy_{(i)} \notin \sY_{j}} p_{i} \right),
\end{equation}
where $c_{\e} = \sum_{n = 1}^{\infty} \frac{1}{n^{1+2\e}}$ and $n_{o}$ is the minimum value of $n$ satisfying
\begin{equation}
    \Big\lfloor \frac{p_{o} n}{2 K} \Big\rfloor > \Big\lceil \frac{4 (2+\e) \ln n}{\left(\Delta_{o}\right)^{2}} \Big\rceil,
\end{equation}
with $p_{o} = \min\limits_{1 \leq i \leq M} p_i$, and $\D_{o} = \min\limits_{\forall i,j : h^{*}(i) \neq j} \D_{j}^{(i)}$.
\end{theorem}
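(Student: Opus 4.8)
The plan is to isolate a burn-in horizon $n_o$ and argue that once $t > n_o$ an optimal arm $j \in \sO$ has, with high probability, accumulated so many pulls for its own contexts in $\sY_j$ that it can no longer win on a context for which it is suboptimal. In the first $n_o$ trials one trivially bounds the non-optimal pulls by $n_o$, giving the leading term. For $t > n_o$ I would write a non-optimal pull of $j$ on a context $\bfy_{(i)} \notin \sY_j$, whose optimal arm is $\ast = h^{*}(\bfy_{(i)})$, as the event that the index of $j$ exceeds that of $\ast$, and decompose it in the usual UCB1 fashion into three cases: (a) $\ast$ is underestimated, (b) $j$ is overestimated, or (c) the confidence radius of $j$ is wider than half the gap, which (using $\D_j^{(i)} \ge \D_o$) forces $m_j(t) < 4(2+\e)\ln t / \D_o^2 =: \tau_t$. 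Chernoff--Hoeffding bounds the probabilities of (a) and (b) by $t^{-2(2+\e)}$ each; summing over $t$ and over the appearances of contexts outside $\sY_j$ yields the term $\frac{\pi^2}{3} \sum_{\bfy_{(i)} \notin \sY_j} p_i$. The whole difficulty thus reduces to showing that case (c) is essentially impossible for $t > n_o$.

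To eliminate case (c) I would establish the key high-probability lemma that $m_j(t) > \tau_t$ for all $t > n_o$. Since $m_j(t) \ge T_{i,j}^{O}(t)$ for any fixed $\bfy_{(i)} \in \sY_j$, where $T_{i,j}^{O}(t)$ counts the pulls of $j$ for that single context, and $T_{i,j}^{O}(t) = S_i(t) - (\text{appearances of } \bfy_{(i)} \text{ on which some other arm was pulled})$ with $S_i(t)$ the number of appearances of $\bfy_{(i)}$, I would bound the two pieces separately. An additive Hoeffding bound gives $S_i(t) \ge p_i t / 2 \ge p_o t / 2$ outside an event of probability $\exp(-p_i^2 t/2)$, whose sum over $t$ contributes $\sum_{\bfy_{(i)} \in \sY_j} 2/p_i^2$. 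For the misses, the crucial observation is the self-correcting nature of the index: whenever the optimal arm $j$ is missed, the arm actually pulled must itself have radius exceeding the gap, hence it was pulled while holding fewer than $\tau_t$ samples, unless an estimation error occurred. Because each of the $K$ arms can be pulled at most $\tau_t$ times while holding fewer than $\tau_t$ samples, the total number of such under-explored pulls up to trial $t$, and therefore the number of non-error misses, is at most $K \tau_t$. Consequently $m_j(t) \ge \frac{p_o t}{2} - K\tau_t - (\text{error misses})$ on the good event, and the defining inequality $\lfloor p_o n /(2K) \rfloor > \lceil 4(2+\e)\ln n / \D_o^2 \rceil$ is precisely the statement that for $t > n_o$ the linear term $\frac{p_o t}{2}$ overtakes $K\tau_t$, forcing $m_j(t) > \tau_t$ and ruling out case (c).

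The error misses left over in the previous step generate the last and most delicate term. Bounding them calls for a union bound over the $K$ competing arms and over the two sample-count indices, each estimation-error event carrying probability at most $t^{-2(2+\e)}$; after converting the trial index into the number of appearances of $\bfy_{(i)}$, which costs inverse powers of $p_i$, and summing the resulting series, one is left with the residual sum $c_\e = \sum_n n^{-(1+2\e)}$ and the factor $2^{3+2\e} K^{4+2\e}$, producing the term $2^{3+2\e} K^{4+2\e} c_\e \sum_{\bfy_{(i)} \in \sY_j} (p_i)^{-(2+2\e)}$. I expect this bookkeeping, namely tracking the powers of $K$ and $1/p_i$ through the nested union bounds, to be the main obstacle, and it is exactly here that the strict positivity of $\e$ is indispensable: $c_\e$ converges, and hence this term stays constant in $n$, only when $\e > 0$, which is the very reason the UCB1 radius is inflated from $\sqrt{2 \ln n / m}$ to $\sqrt{(2+\e)\ln n/m}$. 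Collecting the four contributions, the burn-in $n_o$, the appearance-concentration term, the error-miss term, and the non-optimal-pull estimation term, yields the stated bound.
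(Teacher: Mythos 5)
Your skeleton matches the paper's at the top level: a burn-in $n_o$ bounded trivially, a Hoeffding argument for context under-realization contributing $\sum_{\bfy_{(i)}\in\sY_j} 2/p_i^2$, and a final three-way UCB decomposition on trials where $j$ is both well-realized and well-pulled, contributing $\frac{\pi^2}{3}\sum_{\bfy_{(i)}\notin\sY_j}p_i$. The divergence, and the gap, is in how you certify that $j$ has been pulled often enough. The paper does this with a dedicated high-probability lemma (Lemma \ref{lem:high_prob_ucb}): by pigeonhole, $T^*(n) < n/K$ forces some single suboptimal arm to receive its $(\lfloor n/K\rfloor+1)$-th pull at some identifiable trial $t \le n$ while already holding $\lfloor n/K\rfloor > \frac{4(2+\e)\ln n}{\D^2}$ samples, so that pull can only be an estimation error; union-bounding over $t$ and the optimal arm's sample count gives $\pr\{T^*(n)<n/K\} < 2K^{4+2\e}n^{-(2+2\e)}$, which is \emph{summable in $n$}. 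You instead write $m_j(t) \ge S_i(t) - K\cdot\frac{4(2+\e)\ln t}{\D_o^2} - E(t)$, where the middle term is your (correct) deterministic count of under-explored pulls and $E(t)$ is the cumulative count of "error misses."

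The problem is that you never actually control $E(t)$, and the obvious tools do not suffice. To rule out case (c) at trial $t$ you need $\pr\{E(t) \ge p_it/2 - (K+1)\cdot\frac{4(2+\e)\ln t}{\D_o^2}\}$ to be summable over $t$. Each individual error event has probability $O(t^{-2(2+\e)})$, so $\bbE[E(\infty)]$ is a finite constant, but Markov's inequality then gives only $\pr\{E(t)\ge ct\} = O(1/t)$, whose sum diverges; splitting $E(t)$ into early and late errors and union-bounding the late ones also fails for small $\e$. This cumulative-count difficulty is precisely what the paper's pigeonhole device avoids, by localizing "too many misses by time $n$" to a single trial and a single large sample count. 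Without that (or an equivalent concentration argument for $E(t)$), your step "forcing $m_j(t) > \tau_t$ and ruling out case (c)" does not close, and the third term $2^{3+2\e}K^{4+2\e}c_\e\sum(p_i)^{-(2+2\e)}$ — which you concede is the main obstacle and only assert "works out" — is exactly the part left unproved. Two smaller points: your burn-in condition needs $p_ot/2$ to dominate $(K+1)$ times the exploration threshold, not $K$ times, so the stated $n_o$ would have to be adjusted; and note the paper applies its lemma \emph{conditionally} on $N_i(n)=m$ and sums over $m \ge p_in/2$, which is where the $(p_i)^{-(2+2\e)}$ and $2^{3+2\e}$ factors actually come from.
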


Theorem \ref{thm:non_opt_T_UB}, whose proof can be found in appendix \ref{apx:non_opt_arm}, provides an upper bound on the number of pulls of any non-optimal arm that scales logarithmically in time. As discussed previously, all such pulls contribute to the regret. Theorem \ref{thm:opt_T_UB} states that the number of non-optimal pulls of any optimal arm is bounded above by a constant. Regret contribution of the optimal arms is, therefore, bounded. We sketch the proof of theorem \ref{thm:opt_T_UB} to emphasize the basic ideas of our novel proof technique and provide the detailed proof in appendix \ref{apx:opt_arm_bound}.

In order to bound the number of pulls of the optimal arms in DCB($\e$), we require a high probability bound on the optimal pulls by UCB1 \cite{auer2002} in the standard MAB problem. Since we are not aware of any such existing result, we derive one ourselves. Lemma \ref{lem:high_prob_ucb} shows a bound for the generalized UCB1 policy called UCB1($\e$), proof of which can be found in appendix \ref{apx:high_prob_ucb}. Note that the confidence interval in UCB1($\e$) is $\sqrt{\frac{(2+\e)\ln n}{m_{k}}}$, which reduces to that of UCB1 for $\e = 0$. In this context, $\m_{j}$ denotes the mean value of $j$-th machine and $\m^{*} = \max\limits_{1 \leq j \leq K} \m_{j}$.

\begin{lemma} [High probability bound for UCB1($\e$)] \label{lem:high_prob_ucb}
In the standard stochastic MAB problem, if the UCB1($\e$) policy is run on $K$ machines having arbitrary reward distributions with support in $[0,1]$, then
\begin{equation}
\pr \left\{ T^{*}(n) < \frac{n}{K}\right\} < \frac{2K^{4+2\e}}{n^{2+2\e}}, \label{eq:lem_high_prob}
\end{equation}
for all $n$ such that
\begin{equation}
\left\lfloor \frac{n}{K}\right\rfloor > \frac{4(2+\e) \ln n}{\min\limits_{j \neq j^{*}} (\m^{*} - \m_j)^{2}} \label{eq:cond_ucb}.
\end{equation}
\end{lemma}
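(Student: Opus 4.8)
The plan is to lower-bound the number of optimal pulls by controlling the over-pulling of each suboptimal arm, reusing the selection-rule analysis of UCB1($\e$) but extracting a tail bound rather than an expectation. First I would reduce the statement by a pigeonhole argument: if $T^{*}(n) < n/K$, then the suboptimal arms together account for more than $n(1-1/K)$ pulls, so by averaging over the $K-1$ suboptimal arms at least one arm $j \neq j^{*}$ satisfies $T_{j}(n) > n/K$. Hence $\{T^{*}(n) < n/K\} \subseteq \bigcup_{j \neq j^{*}} \{T_{j}(n) > n/K\}$, and it suffices to bound $\pr\{T_{j}(n) > n/K\}$ for each fixed suboptimal $j$ and then apply a union bound over the $K-1$ arms.

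Next I would localize to the last trial $t \leq n$ at which arm $j$ is pulled. At that trial the selection rule forces $\overline{X}^{*}_{a} + \sqrt{(2+\e)\ln t / a} \leq \overline{X}_{j,b} + \sqrt{(2+\e)\ln t / b}$, where $a = T^{*}(t-1)$ and $b = T_{j}(t-1) = T_{j}(n) - 1 \geq \lfloor n/K \rfloor$. Condition (\ref{eq:cond_ucb}) guarantees $\lfloor n/K \rfloor > 4(2+\e)\ln n / \D_{j}^{2} \geq 4(2+\e)\ln t / \D_{j}^{2}$ with $\D_{j} = \m^{*} - \m_{j}$, so the confidence radius of arm $j$ is strictly below $\D_{j}/2$; the standard three-way case analysis then collapses to two possibilities, namely the optimal-arm under-estimate $\overline{X}^{*}_{a} \leq \m^{*} - \sqrt{(2+\e)\ln t / a}$ or the suboptimal over-estimate $\overline{X}_{j,b} \geq \m_{j} + \sqrt{(2+\e)\ln t / b}$. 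The decisive observation, which is what makes the bound decay in $n$ rather than being a mere constant, is that $b \geq \lfloor n/K \rfloor$ forces $t - 1 \geq b$, hence $t > n/K$, so every trial index entering the tail bounds is large.

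I would then bound the two tail events by Chernoff--Hoeffding. Viewing each arm's reward stream as a pre-drawn i.i.d.\ tape (independent of the adaptive pull times) lets me bound, for each fixed count $s$, $\pr\{\overline{X}^{*}_{s} \leq \m^{*} - \sqrt{(2+\e)\ln t / s}\} \leq t^{-(4+2\e)}$ and likewise for the over-estimate, and then union-bound over the admissible counts $a, b$ and trials $t \in \{\lceil n/K \rceil, \dots, n\}$. Because $t > n/K$, the resulting double sum is dominated by $\sum_{t > n/K} t^{-(3+2\e)} \lesssim (n/K)^{-(2+2\e)}$; multiplying by the two event types, summing over the $K-1$ arms, and absorbing constants yields the stated bound $2K^{4+2\e}/n^{2+2\e}$.

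The main obstacle is the rigorous decoupling in the last step: the counts $a, b$ and the stopping trial $t$ are all random and correlated with the empirical means, so the Hoeffding bounds cannot be applied directly at these random indices. The standard remedy is the fixed-count union bound over all possible values, justified by the i.i.d.-tape view, and the bookkeeping must be arranged so that the restriction $t > n/K$ is preserved throughout the summation; keeping the exponents and the factor $K^{4+2\e}$ aligned through this union bound is the delicate part, whereas eliminating the third case via (\ref{eq:cond_ucb}) is routine once the radius bound $\sqrt{(2+\e)\ln t / b} < \D_{j}/2$ is in hand.
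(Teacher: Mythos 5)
Your proposal is correct and follows essentially the same route as the paper's proof: pigeonhole plus a union bound over suboptimal arms, localization to a trial where arm $j$ has already accumulated at least $\lfloor n/K\rfloor$ pulls (the paper pins this to the $(\lfloor n/K\rfloor+1)$-th pull of $j$, which fixes that count deterministically, while you take the last pull and union over the count --- both work since each event type is unioned over only one count index), elimination of the gap case via condition (\ref{eq:cond_ucb}), and Hoeffding tail bounds summed over $t>n/K$ to get the $n^{-(2+2\e)}$ decay. No gaps worth flagging.
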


Note that ${\frac{n}{\ln n}}$ is an increasing sequence in $n$ and there exists some $n' \in \bbN$ such that condition (\ref{eq:cond_ucb}) is satisfied for every $n \geq n'$. $n'$ is a constant whose actual value depends on the true mean values for the arms.

\begin{proof}[Proof sketch of theorem \ref{thm:opt_T_UB}]
For every optimal arm, there exists a non-zero probability of appearance of the contexts for which it is optimal. Over successive pulls, DCB($\e$) leads us to pull the optimal arms with very high probability for their corresponding contexts. Thus, these arms are pulled at least a constant fraction of time with high probability. Since the constant fraction of time is much more than the logarithmic exploration requirement, these optimal arms need not be explored during other contexts. The idea is to show that the expected number of these non-optimal pulls is bounded.

Let us define $K$ mutually exclusive and collectively exhaustive context-sets $\sY_{j}$ for $j \in \sS$ as
\begin{equation}
\sY_{j} = \{ \bfy_{(i)} \in \sY \mid h^{*}(i) = j \}. \nn
\end{equation}
Further, let $N_{i}(n)$ denote the number of occurrences of context $\bfy_{(i)}$ till the $n$-th trial. At the $n$-th trial, if an optimal arm $j$ is pulled for a context $\bfy_{n} = \bfy_{(i)} \notin \sY_{j}$, this happens due to one of the following reasons:
\begin{enumerate}
\item The number of pulls of $j$-th arm till the $n$-th trial is small and we err due to lack of sufficient observations of $j$-th arm's values, which means the arm is {\it under-played}.
\item We err in spite of having previously pulled $j$-th arm enough number of times.
\end{enumerate}
We bound the probability of occurrence for these two cases by functions of $n$, whose infinite series over $n$ is convergent. These reasons can be analyzed in three mutually exclusive cases as follows

\paragraph{Under-realized contexts} In this case, the number of occurrences of contexts in $\sY_{j}$ till the $n$-th trial is small. If all the contexts in $\sY_{j}$ are under-realized, this could lead to $j$-th arm being under-explored at trial $n$. We use Hoeffding's inequality (see appendix \ref{apx:inequalities}) to bound the expected number of occurrences of this event.

\paragraph{Under-exploited arm} We assume that no context $\bfy_{(i)} \in \sY_{j}$ is under-realized and yet $j$-th arm is not pulled in enough trials. For these contexts $j$-th arm is optimal, but we still don't pull it often enough. In order to bound the expected number of occurrences of this event, we use the high probability bound from lemma \ref{lem:high_prob_ucb}.

\paragraph{Dominant confidence bounds} This case corresponds to the event where no context in $\sY_{j}$ is under-realized and $j$-th arm is also pulled sufficiently often, yet it is pulled non-optimally. This occurs when the upper confidence bound for some other arm dominates the corresponding DCB($\e$) index.

We prove the theorem by upper bounding the expected number of occurrences of these events over all trials.
\end{proof}

\subsection{Upper Bound on Regret}
Upper bounds on the number of pulls of all arms in theorems \ref{thm:non_opt_T_UB} and \ref{thm:opt_T_UB} lead us to the regret bound in theorem \ref{thm:reg_UB}. It states a regret upper bound that grows linearly in the number of non-optimal arms and logarithmically in time, i.e. $O(\left| \overline{\sO} \right| \ln n)$.

\begin{theorem}[Regret bound for DCB($\e$)] \label{thm:reg_UB}
The expected regret under the DCB($\e$) policy till trial $n$ is at most
\begin{equation}
\frac{4 (2+\e)\D_{\max}}{\left(\D_{o} \right)^2} \left| \overline{\sO} \right| \ln n + O(1),
\end{equation}
where $\D_{\max} = \underset{\forall i,j }{\max}\ \D_{j}^{(i)}$.
\end{theorem}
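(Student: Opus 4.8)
The plan is to reduce the regret directly to the two counting bounds of Theorems \ref{thm:non_opt_T_UB} and \ref{thm:opt_T_UB}, since together they already control the expected number of sub-optimal pulls of every arm; once those are in hand, the regret bound is essentially an accounting step. First I would rewrite the regret (\ref{eq:def_regret}) as an accumulation of per-trial reward losses: when the context at trial $t$ is $\bfy_{(i)}$ and the pulled arm $j$ satisfies $h^{*}(i) \neq j$, the instantaneous loss against the genie is exactly $\D_{j}^{(i)} = \th_{i}^{*} - \th_{i,j} \leq \D_{\max}$, whereas a contextually optimal pull ($j = h^{*}(i)$) incurs no loss. Grouping these losses by arm, every pull of a non-optimal arm $j \in \overline{\sO}$ is charged (so its full count $T_{j}(n)$ contributes), while an optimal arm $j \in \sO$ is charged only on its non-optimal pulls, whose number is $T_{j}^{N}(n)$. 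Bounding each gap by $\D_{\max}$ then yields
\[
\mathfrak{R}(n) \leq \D_{\max}\left( \sum_{j \in \overline{\sO}} \bbE\left[ T_{j}(n) \right] + \sum_{j \in \sO} \bbE\left[ T_{j}^{N}(n) \right] \right).
\]

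Next I would feed Theorem \ref{thm:non_opt_T_UB} into the first sum. For each $j \in \overline{\sO}$ its logarithmic coefficient is $4(2+\e)/\min_{i}(\D_{j}^{(i)})^{2}$. Since $\D_{o}$ is by definition the smallest gap over all context--arm pairs with $h^{*}(i) \neq j$, we have $\min_{i}(\D_{j}^{(i)})^{2} \geq (\D_{o})^{2}$, so each coefficient is at most $4(2+\e)/(\D_{o})^{2}$. Summing over the $\left| \overline{\sO} \right|$ non-optimal arms and multiplying by $\D_{\max}$ produces the claimed leading term $\frac{4(2+\e)\D_{\max}}{(\D_{o})^{2}} \left| \overline{\sO} \right| \ln n$, together with the finite constant $\D_{\max} \left| \overline{\sO} \right| (1 + \pi^{2}/3)$ that I would absorb into $O(1)$. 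Finally I would dispatch the second sum using Theorem \ref{thm:opt_T_UB}: each $\bbE\left[ T_{j}^{N}(n) \right]$ is bounded by a quantity independent of $n$, so the entire contribution of the optimal arms is $O(1)$ and does not affect the growth rate. Collecting both pieces gives the statement.

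Because the heavy lifting lives in the prior theorems (most notably the constant bound of Theorem \ref{thm:opt_T_UB}), I do not expect a genuine analytic obstacle here. The one point requiring care is the decomposition itself: an optimal arm must be charged regret only on its $T_{j}^{N}(n)$ non-optimal pulls and never on its optimal pulls $T_{j}^{O}(n)$, which is precisely why Theorem \ref{thm:opt_T_UB} is stated in terms of $T_{j}^{N}$ rather than $T_{j}$ --- conflating the two would reintroduce a spurious logarithmic term for the arms in $\sO$ and break the $O(\left| \overline{\sO} \right| \ln n)$ scaling that is the whole point of the result. A related bookkeeping subtlety is that the replacement of each arm-specific minimum gap by the global $\D_{o}$ must be carried out in the upper-bounding direction, so that $1/\min_{i}(\D_{j}^{(i)})^{2} \leq 1/(\D_{o})^{2}$ and the inequality is preserved.
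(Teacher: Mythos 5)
Your proposal is correct and follows essentially the same route as the paper: charge each non-optimal pull at most $\D_{\max}$, use Theorem~\ref{thm:opt_T_UB} to make the optimal arms' contribution $O(1)$, and use Theorem~\ref{thm:non_opt_T_UB} together with $\min_i \D_j^{(i)} \geq \D_o$ to get the $\frac{4(2+\e)}{(\D_o)^2}\ln n$ pull count for each of the $\left|\overline{\sO}\right|$ non-optimal arms. Your write-up is in fact slightly cleaner in keeping the pull-count bound and the $\D_{\max}$ multiplier separate, whereas the paper's prose conflates them in one intermediate expression, but the argument is identical.
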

\begin{proof}[Proof of theorem \ref{thm:reg_UB}]
Since the expected number of pulls of optimal arms is bounded (from theorem \ref{thm:opt_T_UB}), their regret contribution is also bounded. Note that this regret bound is still a function of $K$, but a constant in terms of $n$. Using theorem \ref{thm:non_opt_T_UB} and the definition of $\D_{o}$, total number of pulls of any non-optimal arm till $n$-th trial is upper bounded by $\frac{4 (2+\e)\D_{\max} \ln n}{\left(\D_{o} \right)^2} + O(1)$. As the contribution to expected regret of any non-optimal pull is upper bounded by $\D_{\max}$, considering all non-optimal arms proves the theorem.
\end{proof}

\begin{remark}\label{rem:eps_req}
Note that in order for theorem \ref{thm:opt_T_UB} to hold, DCB($\e$) must be run with a strictly positive $\e$. For $\e = 0$, the confidence bound looks similar to the one from UCB1 \cite{auer2002}. It remains to be seen if the condition $\e > 0$ is an artifact of our proof technique or a stringent requirement for constant regret bound in theorem \ref{thm:opt_T_UB}. 
\end{remark}

\begin{remark} \label{rem:generalization}
If the reward function is not dependent on the context, this formulation reduces to the standard MAB problem. In this special case, our DCB($\epsilon$) algorithm reduces to UCB1. Thus, our results generalize the results from \cite{auer2002}.
\end{remark}

\begin{remark} \label{rem:dsee}
Strategies like deterministic sequencing of exploration and exploitation (DSEE) \cite{vakili2013}, $\e$-greedy \cite{auer2002} and epoch greedy \cite{langford2008epoch} separate exploration from exploitation. All the arms are explored either deterministically equally in DSEE or uniformly at random in others. Using these strategies for our problem leads to logarithmically scaling regret contributions for all the arms. Since the optimal arms are not known a priory, these strategies do not reduce the regret contribution from optimal arms. 
\end{remark}

\subsection{Asymptotic Lower bound} \label{sec:LB}
For the standard MAB problem, Lai and Robbins provided an asymptotic lower bound on the regret in \cite{lai1985}. For every non-optimal arm $j$ and the families of arm-distributions parametrized by a single real number, they showed:
\begin{equation}
\liminf\limits_{n \rightarrow \infty} \frac{\bbE\{ T_{j}(n)\} }{\ln n} \geq D_{j},
\end{equation}
where $D_{j} > 0$ is a function of the Kullback-Leibler divergence between the reward distribution of $j$-th arm and that of the optimal arm. This result was extended by Burnetas and Katehakis to distributions indexed by multiple parameters in \cite{burn1997}. It is important to note that the lower bound holds only for consistent policies. A policy $\bm{\pi}$ is said to be consistent if $\mathfrak{R}_{\bm{\pi}}(n) = o(n^{\a})$ for all $\a >0$ as $n \rightarrow \infty$. Any consistent policy, therefore, pulls every non-optimal arm at least $\Omega(\ln n)$ times asymptotically. 

The contextual bandit problem can be visualized an interleaved version of several MAB problems, each corresponding to a distinct context. These MAB instances are, however, not independent. When the reward function is known, information gained from an arm pull for any context can be used to learn about the rewards for other contexts as well. In terms of learning, there is no distinction among the pulls of a specific arm for distinct contexts. The burden of exploration, thus, gets shared by all the contexts. Let $N_{i}(n)$ denote the number of occurrences of $\bfy_{(i)}$ till the $n$-th trial. Under the conditions from \cite{lai1985}, for a non-optimal arm $j$ under a consistent policy we get
\begin{equation}
    \liminf\limits_{n \rightarrow \infty} \frac{\bbE\{ T_{j}(n)\} }{\ln N_{i}(n)} \geq D_{i,j} \label{eq:lower_cond1},
\end{equation}
where $D_{i,j} > 0$ depends on the KL divergence between the conditional reward distribution of $j$-th arm and that of the contextually optimal arm. By the law of large numbers, $N_{i}(n) \rightarrow p_{i} n$ as $n \rightarrow \infty$. Thus, we write
\begin{equation}
    \lim\limits_{n \rightarrow \infty} \frac{\ln N_{i}(n)}{\ln n} = 1 + \lim\limits_{n \rightarrow \infty} \frac{\ln \left(N_{i}(n)/n \right) }{\ln n} = 1 \nn.
\end{equation}
Condition (\ref{eq:lower_cond1}), therefore, reduces to 
\begin{equation}
    \liminf\limits_{n \rightarrow \infty} \frac{\bbE\{ T_{j}(n)\} }{\ln n} \geq D_{i,j} \nn .
\end{equation}
Note that there is one such condition corresponding to each context and non-optimal arm pair. Combining these conditions for a non-optimal arm $j$, we get 
\begin{equation}
    \liminf\limits_{n \rightarrow \infty} \frac{\bbE\{ T_{j}(n)\} }{\ln n} \geq \max\limits_{1 \leq i \leq M}  D_{i,j} > 0 \label{eq:lower_cond3}.
\end{equation}
This gives an asymptotic lower bound of $\Omega(\ln n)$ on the number of pulls of a non-optimal arm under a consistent policy. Note that this bound matches the upper bound of $O(\ln n)$ for DCB($\e$) policy proving its order optimality.

\section{Continuous Contexts} \label{sec:conti}
In this section, we extend our algorithm for discrete contexts to the continuous case. For simplicity, we assume that $\sY \subseteq \bbR$. This can be easily extended to general metric spaces similar to \cite{lu2010lipschitz, slivkins2014similarity}. We additionally assume the reward function to be $L$-Lipschitz in context, which means
\begin{equation}
    \left| g(y, x) - g(y', x) \right| \leq L \left| y - y' \right|, \quad \forall y, y' \in \sY, x\in \sX.  \label{eq:lipschitz}
\end{equation}

Our main idea is to partition the support of context space into intervals of size $\d$. We learn and store the statistics from the discrete-context algorithms for the center points of each interval. We quantize each incoming context value $y_{t}$ is into $\hat{y}_{t}$ and use DCB($\e$) as a subroutine to perform successive pulls. Since the context-distribution is not known, we resort to uniform quantization. Algorithm \ref{algo:CCB} presents our continuous contextual bandit algorithm based on these ideas. Theorem \ref{thm:CCB_bound} states its regret upper bound in terms of the time horizon $T$.

\begin{algorithm}
\caption{CCB($\e, \d$)}
\label{algo:CCB}
\begin{algorithmic}[1]
\STATE  {\bf Parameters:} $\e, \d > 0$.
\STATE Partition $\sY$ into intervals of size $\d$.
\STATE Set up a DCB($\e$) instance with the set $\hat{\sY}$ containing the center points of each partition as the context-set.
\FOR {$t = 1$ to $T$} 
\STATE Quantize $y_{t}$ into $\hat{y}_{t}$;
\STATE Feed $\hat{y}_{t}$ as the context to DCB($\e$) and pull the arm it recommends;
\STATE Update the internal statistics of DCB($\e$);
\ENDFOR
\end{algorithmic}
\end{algorithm}

\begin{theorem}[Regret bound for CCB($\e,\d$)] \label{thm:CCB_bound}
The expected regret under CCB($\e,\d$) policy is upper bounded by $\d LT + O(\log T)$, where $T$ is the time horizon.
\end{theorem}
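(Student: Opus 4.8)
The plan is to split the regret of CCB($\e,\d$) into a \emph{quantization} part that produces the linear $\d L T$ term and a \emph{learning} part that the DCB($\e$) subroutine controls, producing the $O(\log T)$ term. First I would extend the notation $\th(y,a) := \bbE[g(y,x_a)]$ to arbitrary $y \in \sY$, so that $R(h^{*}) = \bbE_{y}\big[\max_a \th(y,a)\big]$ and, by the i.i.d.\ assumption, the expected regret is $\sum_{t=1}^{T}\bbE\big[\th(y_t, h^{*}(y_t)) - \th(y_t, \sA(t))\big]$, where $\sA(t)$ is the arm DCB($\e$) recommends for the quantized context $\hat y_t$. The elementary fact driving everything, following directly from the $L$-Lipschitz assumption \eqref{eq:lipschitz} and the fact that taking expectations preserves the bound, is that $|\th(y,a) - \th(\hat y, a)| \leq L|y - \hat y| \leq L\d/2$ for every arm $a$, since each quantized context $\hat y$ is the center of an interval of width $\d$.

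Next I would introduce the quantized-optimal hypothesis $\hat h^{*}(\hat y) := \argmax_a \th(\hat y, a)$ and telescope each per-trial regret term through it:
\begin{align}
\th(y_t, h^{*}(y_t)) - \th(y_t, \sA(t))
&= \big[\th(y_t, h^{*}(y_t)) - \th(\hat y_t, \hat h^{*}(\hat y_t))\big] \nn \\
&\quad + \big[\th(\hat y_t, \hat h^{*}(\hat y_t)) - \th(\hat y_t, \sA(t))\big] \nn \\
&\quad + \big[\th(\hat y_t, \sA(t)) - \th(y_t, \sA(t))\big]. \nn
\end{align}
The first bracket is at most $L\d/2$: applying the Lipschitz bound arm-by-arm gives $|\max_a \th(y,a) - \max_a \th(\hat y, a)| \leq L\d/2$, so the best achievable reward at the true context exceeds that at the quantized context by at most $L\d/2$. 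The third bracket is at most $L\d/2$ by the same bound applied to the single arm $\sA(t)$. Summed over the $T$ trials, these two brackets contribute at most $L\d T$.

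The middle bracket is exactly the per-trial regret DCB($\e$) incurs \emph{on the quantized instance}, whose contexts are the interval centers $\hat{\sY}$ and whose mean reward for arm $j$ at center $\hat y_{(i)}$ is $\th(\hat y_{(i)}, j)$. I would verify that this is a genuine discrete stochastic contextual bandit problem: since the $y_t$ are i.i.d.\ draws from $D$ and quantization is deterministic, the $\hat y_t$ are i.i.d.\ with probabilities $p_i = \pr\{y \in \text{interval } i\}$, and DCB($\e$) updates its statistics using precisely these centers. Theorem \ref{thm:reg_UB} then applies verbatim to bound $\sum_{t=1}^{T}\bbE\big[\th(\hat y_t, \hat h^{*}(\hat y_t)) - \th(\hat y_t, \sA(t))\big]$ by $O(\log T)$. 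Taking expectations of the full decomposition and combining the three pieces yields $\mathfrak{R}_{\text{CCB}}(T) \leq \d L T + O(\log T)$.

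The point requiring the most care is the bookkeeping that keeps the three brackets aligned so the middle one matches Theorem \ref{thm:reg_UB} exactly: DCB($\e$) forms its indices using the \emph{quantized} context, the reward actually paid uses the \emph{true} context $y_t$, and the benchmark $R(h^{*})$ lives on the continuous space. The telescoping above is engineered precisely to absorb both mismatches into Lipschitz terms. I would also note that for bounded support the number of intervals is $M = \lceil |\sY|/\d \rceil$, so the hidden constant in $O(\log T)$ depends on $\d$ (through $M$ and the induced gaps) while the dependence on $T$ stays logarithmic; this is the formal content of the regret--storage trade-off, as the storage $\Theta(MK) = \Theta(K/\d)$ grows as $\d$ shrinks.
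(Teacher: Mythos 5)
Your proposal is correct and follows essentially the same route as the paper: a three-term telescoping of the per-trial regret through the quantized context, with the two outer brackets bounded by $L\d/2$ via the Lipschitz condition and the middle bracket identified as the regret of the DCB($\e$) subroutine on the quantized instance, which Theorem \ref{thm:reg_UB} bounds by $O(\log T)$. The only cosmetic difference is that you pivot directly through $\th(\hat y_t, \hat h^{*}(\hat y_t))$ using the fact that a pointwise maximum of $L$-Lipschitz functions is $L$-Lipschitz, whereas the paper keeps the arm $h^{*}(y_t)$ in the first bracket and invokes the optimality of $\hat h^{*}$ at $\hat y_t$ separately; both yield the identical bound.
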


\begin{proof}
Let the instance of DCB($\e$) algorithm working with the quantized contexts $\hat{y}_{t}$ be $\sA$ and its expected regret be $\mathfrak{R}_{\sA}(T)$. Similarly, let $\sB$ denote the instance of CCB($\e,\d$) algorithm and $\mathfrak{R}_{\sB}(T)$ its expected regret. The Bayes optimal policies on $\sY$ and $\hat{\sY}$ are denoted by $h^{*}$ and $\hat{h}^{*}$, respectively. It is important to note that $\hat{h}^{*}(\hat{y}) = h^{*}(\hat{y})$ for all $\hat{y} \in \hat{\sY}$ and $\sA(t) = \sB(t)$ since $\sB$ follows the recommendations of $\sA$. The main difference is the input context for $\sA$ and $\sB$, which changes the optimal rewards. The regret of $\sA$, in terms of these notations, is 
\begin{align}
    \mathfrak{R}_{\sA}(T) &= \sum_{t=1}^{T} \bbE \left[ g(\hat{y}_{t}, x_{\hat{h}^{*}(t),t})  - g(\hat{y}_{t}, x_{\sA(t),t})\right].
\end{align}
Thus, the regret of $\sB$ is described as
\begin{align}
    \mathfrak{R}_{\sB}(T) &= \sum_{t=1}^{T} \bbE \left[ g(y_{t}, x_{h^{*}(t),t})  - g(y_{t}, x_{\sB(t),t})\right] \nn \\
    &= \sum_{t=1}^{T} \bbE \left[ g(y_{t}, x_{h^{*}(t),t}) -g(\hat{y}_{t}, x_{h^{*}(t),t}) + g(\hat{y}_{t}, x_{h^{*}(t),t})  - g(\hat{y}_{t}, x_{\sB(t),t}) + g(\hat{y}_{t}, x_{\sB(t),t})- g(y_{t}, x_{\sB(t),t}) \right] \nn \\
    & \leq \sum_{t=1}^{T} \bbE \left[ L \frac{\d}{2} \right] + \sum_{t=1}^{T} \bbE \left[ g(\hat{y}_{t}, x_{h^{*}(t),t})  - g(\hat{y}_{t}, x_{\sB(t),t}) \right] + \sum_{t=1}^{T} \bbE \left[ L \frac{\d}{2} \right] \tag{L-Lipschitz condition} \\ 
    & \leq \d LT +  \sum_{t=1}^{T} \bbE \left[ g(\hat{y}_{t}, x_{\hat{h}^{*}(t),t})  - g(\hat{y}_{t}, x_{\sA(t),t}) \right] \nn \\
    & \leq \d LT +  \mathfrak{R}_{\sA}(T) . \label{eq:reg_CCB}
\end{align}
From theorem \ref{thm:reg_UB}, we know that $\mathfrak{R}_{\sA}(T)$ is at most $O(\log T)$. Substituting this in (\ref{eq:reg_CCB}) yields the result.
\end{proof}

Note that the choice of $\d$ is dependent on the time horizon $T$. The regret upper bound is, therefore, not linear in $T$ as it might appear from theorem \ref{thm:CCB_bound}. In the following subsections, we discuss how $\d$ can be tuned to $T$ in order to obtain the desired storage and regret guarantees. Hereafter, we will use $\d_{T}$ to denote the tuned parameter $\d$.

\subsection{Known Time Horizon}
The CCB($\e,\d_{T}$) regret bound of $O(\d_{T} L T)$ is largely controlled by the parameter $\d_{T}$.  Apart from the regret, another important concern for online learning algorithms is the required storage. Since there are $O(\frac{1}{\d_{T}})$ context intervals, the storage required is $O(\frac{K}{\d_{T}})$. This manifests a peculiar storage vs regret trade-off. As $\d_{T}$ increases, its storage decreases, while the regret increases linearly. This trade-off arises due to the structure of our contextual bandit setting that knows the context-reward mapping function. If the time horizon is fixed and known in advance, $\d_{T}$ can be tuned based on the performance requirements. 

Note that the regret bounds of CCB($\e, \d$) are only possible due to joint learning for all context intervals. Multi-UCB can also be extended to handle continuous contexts by using this quantization technique. This essentially generalizes the Query-ad-clustering algorithm from \cite{lu2010lipschitz} with tunable $\d$. A similar algorithm employing adaptive quantization intervals appears in \cite{slivkins2014similarity}. Both of these algorithms cannot exploit the reward function knowledge and they thus have to perform independent exploration of arms for different context intervals. As $\d$ decreases, the number of context intervals increases which also increases the number of trials spent in exploring different arms. These algorithms achieve minimum regret bounds of $O(T^{2/3 + \e})$ for any $\e > 0$ and $O(T^{2/3} \log T)$ respectively. Furthermore a lower bound of $\Omega(T^{2/3})$ on all continuous contextual bandit algorithms is proved in \cite{lu2010lipschitz}. Since we assume the reward function knowledge, the agent can obtain sub-linear regret bounds as small as $\Theta(\log T)$ by appropriately choosing $\d_{T}$ in CCB($\e, \d$). This is a substantial improvement over the existing contextual bandit algorithms. Exploiting the knowledge of the reward function, therefore, helps in drastically reducing the regret for the continuous contextual bandits.

\subsection{Unknown Time Horizon}
Even when the time horizon is unknown, similar performance can be obtained using the so-called doubling trick from \cite{lu2010lipschitz, slivkins2014similarity}. It converts a bandit algorithm with a known time horizon into one that runs without that knowledge with essentially same performance guarantees. Let $\sB_{T}$ denote the CCB($\e,\d_{T}$) instance for a fixed horizon $T$ with $\d_{T}$ tuned for regret bounds of $O(L T^{\a})$ for $\a \in [0,1]$. The new algorithm $\sC$ runs in phases $m = 1,2,\cdots$ of $2^{m}$ trials each, such that each phase $m$ runs a new instance of $\sB_{2^{m}}$. Following recurrence relationship relates the regret of $\sC$ to that of $\sB_{T}$
\begin{equation}
    \mathfrak{R}_{\sC}(T) = \mathfrak{R}_{\sC}(T/2) + \mathfrak{R}_{\sB_{T/2}}(T/2).
\end{equation}
Hence we get a regret bound of $O(L T^{\a} \log T)$, while using the storage of same order $O(KT^{1-\a})$. Note that during the execution of $\sC$, previously learned values are discarded in each phase and a fresh instance of $\sB$ is run. Using those stored values for next phases may help in decreasing the regret further. 

The system designer can analyze the regret-storage trade-off and tune $\d_{T}$ based on the hardware specifications and performance requirements. This empowers the designer with more control over the algorithmic implementation than any of the existing contextual bandit algorithms. 

\section{Numerical Simulation Results} \label{sec:simu}
In this section, we present the results of numerical simulations for the channel selection problem from \ref{app:ch_sel}. Consider the case of $M = 4$ contexts uniformly distributed over the set $\sY = \{ 1, 2, 3, 4\}$. Let there be $K = 7$ arms whose arm-values are scaled Bernoulli random variables. For every $1\leq j \leq 7$, the arm values are distributed as $\pr \{ X_{j} = j \} = \frac{8-j}{10}$, $\pr \{ X_{j} = 0 \} = \frac{2 + j}{10}$. Since the genie knows these distributions, it figures out the optimal arms for all the contexts based on the expected reward matrix $(\th_{i,j})$ given by
\begin{equation*}
\setstackgap{L}{1.1\baselineskip}
\fixTABwidth{T}
(\th_{i,j}) =  \parenMatrixstack{ \boxed{0.7} & 0.6 & 0.5 & 0.4 & 0.3 & 0.2 & 0.1 \\
0.7 & \boxed{1.2} & 1.0 & 0.8 & 0.6 & 0.4 & 0.2 \\
0.7 & 1.2 & \boxed{1.5} & 1.2 & 0.9 & 0.6 & 0.3 \\
0.7 & 1.2 & 1.5 & \boxed{1.6} & 1.2 & 0.8 & 0.4 
},
\end{equation*}
where the components in the box are the optimal arms. Hence, $h^{*}(i) = i$ for all $i$. Figure \ref{fig:channel_selection_k7} plots the cumulative regret over the logarithm of trial index. We observe that $\frac{\mathfrak{R} (n)}{\log n}$ converges to a constant for the DCB($10^{-2}$) policy, whereas for the UCB1 policy it continues to increase rapidly, since the regret for UCB1 policy grows linearly for contextual bandits. It must be noted that the logarithmic regret of DCB($\e$) is due to the presence of $3$ non-optimal arms. Multi-UCB regret is also logarithmic in time albeit with higher constants due to independent exploration of arms for each context. 
\begin{figure}[t!]
    \centering
    \begin{subfigure}[b]{0.8\textwidth}
        \centering
        \includegraphics[width=0.95\textwidth]{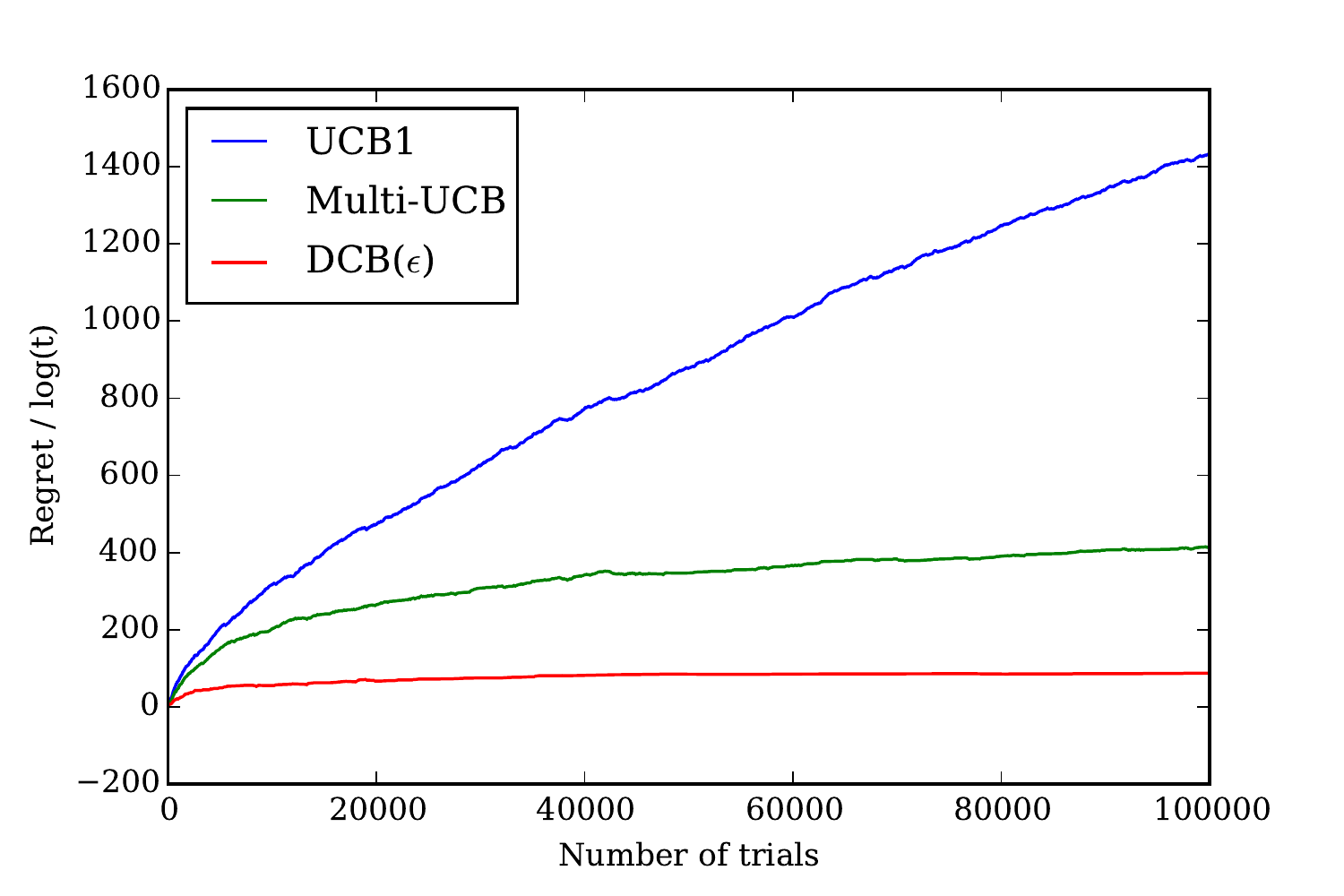}
        \caption{$K = 7$ and $\left| \overline{\sO} \right| = 3$.}
        \label{fig:channel_selection_k7}
    \end{subfigure}
    \begin{subfigure}[b]{0.8\textwidth}
        \centering
        \includegraphics[width=0.95\textwidth]{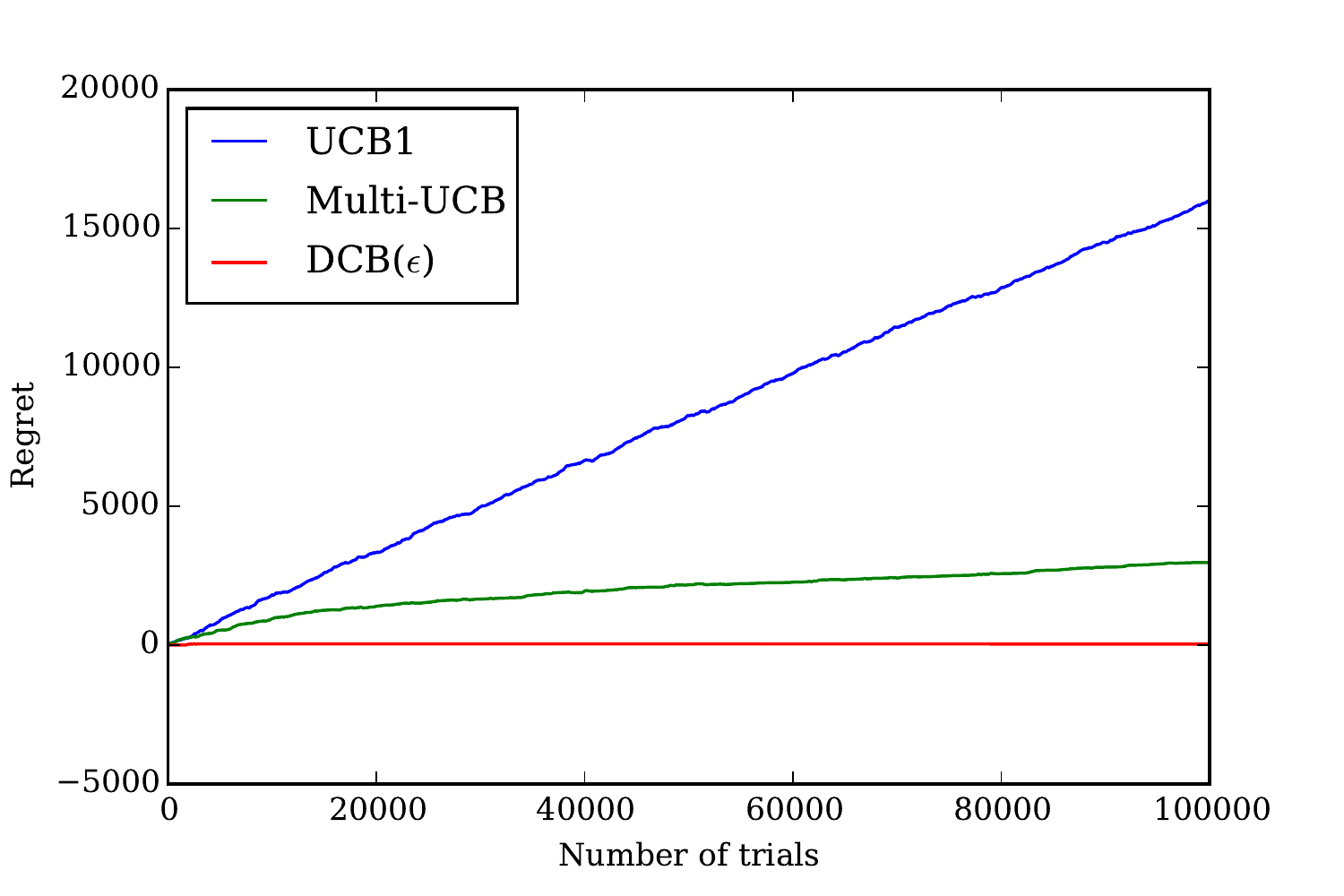}
        \caption{$K = 4$ and $\left| \overline{\sO} \right| = 0$.}
        \label{fig:channel_selection_k4}
    \end{subfigure}
    \captionsetup{justification=justified,singlelinecheck=false}
    \caption{Simulation results for the channel selection problem with $\e = 10^{-2}$.}
\end{figure}

If we reduce the number of arms to $4$, by removing the non-optimal arms $5, 6$ and $7$, then the expected reward matrix for the channel selection problem shrinks to 
\begin{equation*}
\setstackgap{L}{1.1\baselineskip}
\fixTABwidth{T}
(\th_{i,j}) =  \parenMatrixstack{ \boxed{0.7} & 0.6 & 0.5 & 0.4  \\
0.7 & \boxed{1.2} & 1.0 & 0.8 \\
0.7 & 1.2 & \boxed{1.5} & 1.2 \\
0.7 & 1.2 & 1.5 & \boxed{1.6} 
}.
\end{equation*}
Regret performance for this case is plotted in figure \ref{fig:channel_selection_k4}, which shows that the regret growth stops after some trials for DCB($10^{-2}$). It must be noted that we plot the net regret in this case and not the regret divided by $\log n$. Bounded regret is expected, since all arms are optimal and the regret due to non-optimal arms is logarithmic in time, which do not exist as $\left| \overline{\sO}\right|=0$. Since Multi-UCB is unable to exploit the reward-function knowledge, its regret still grows logarithmically in time. Such a case demonstrates the significance of our policy, as it reduced the regret drastically by jointly learning the arms-rewards for all the contexts. Table \ref{tab:regret} compares the regret for UCB1, Multi-UCB and DCB($10^{-2}$) when $T =  10^{5}$ for both channel selection examples. We see that the regret reduction using DCB($\e$) is substantial, especially when the non-optimal arm set is empty.

\begin{table}[htbp]\caption{Regret for the channel selection problem when $T = 10^{5}$}
\centering
\begin{tabular}{l | c c c}
\hline
& UCB1 & Multi-UCB & DCB($10^{-2}$) \\ \hline
$M = 4, K = 7, \left| \overline{\sO}\right| = 3$ & 17262 & 4893 & 1294 \\ \hline
$M = 4, K = 4, \left| \overline{\sO}\right| = 0$ & 15688 & 3278 & 28 \\ \hline
\end{tabular}
\label{tab:regret}
\end{table}

\begin{figure}
    \centering
    \includegraphics[width=0.8\textwidth]{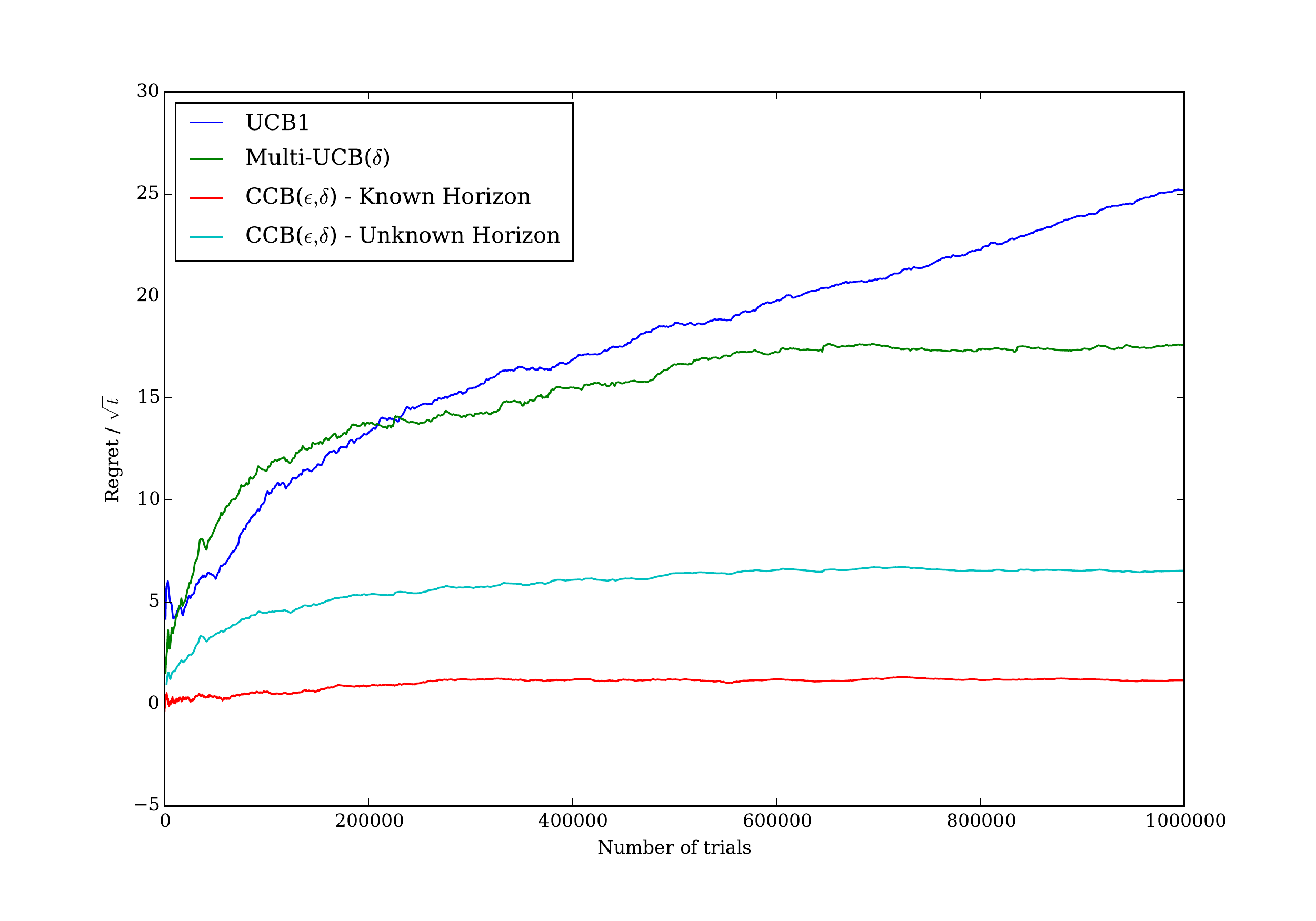}
    \caption{Simulation results for the energy harvesting communications problem with $K = 4$, $\e = 10^{-2}$ and $\d = \frac{1}{\sqrt{T}}$ for tuned algorithms.}
    \label{fig:power_alloc}
\end{figure}

\begin{table}[htbp]\caption{Regret for the power-aware channel selection problem  when $T = 10^{6}$}
\centering
\begin{tabular}{l | c | c | c}
\hline
& $\d = T^{-1/3}$ & $\d = T^{-1/2}$ & $\d = T^{-2/3}$ \\ \hline
Multi-UCB($\d)$ & 15535.8 & 17583.9 & 23117.2 \\ \hline
CCB($\e, \d$) - Unknown $T$ & 8645.7 & 6533.0 & 1476.2 \\ \hline
CCB($\e, \d$) - Known $T$ & 3010.5 & 1163.4 & 481.8 \\ \hline
UCB1 & \multicolumn{3}{c}{25201.5} \\ \hline
\end{tabular}
\label{tab:regret2}
\end{table}

We compare the performance of continuous contextual bandit algorithms for the power-aware channel selection problem in energy harvesting communications from section \ref{app:harvest}. We use the same $K = 4$ arms from the previous example. The context is assumed to be uniformly distributed in $(0,1)$. Note that arms 3 and 4 are the optimal arms in this case. We use $\d = \frac{1}{\sqrt{T}}$ with $T = 10^{6}$ trials for CCB($\e, \d$) with known time horizon. According to the theoretical results from section \ref{sec:conti}, CCB($\e, \d$) yields regret bounds of $O(\sqrt{T})$ and $O(\sqrt{T} \log T)$ in this setting with and without the knowledge of the horizon respectively. We also run a Multi-UCB instance with the same quantization intervals. Figure \ref{fig:power_alloc} plots $\frac{\mathfrak{R} (n)}{\sqrt{n}}$ for UCB1, Multi-UCB with quantized contexts, and CCB($\e, \d$) with known and unknown horizon. The curve corresponding to CCB($\e, \d$) with known horizon converges to a constant, while that with unknown horizon continues to grow slowly. Both of these algorithms outperform Multi-UCB which spends a large number of trials exploring the arms independently for each quantized context. Table \ref{tab:regret2} shows the regret at the horizon for the algorithms tuned with different values of $\d$. We notice that the regret of CCB($\e, \d$) decreases as $\d$ decreases. Even when the time horizon is unknown, CCB($\e, \d$) outperforms Multi-UCB and UCB1. The numerical regret results for Multi-UCB concur with the theoretical result from \cite{lu2010lipschitz} that the quantization intervals of size $\Theta(T^{-1/3})$ yield minimum regret. Thus we observe that reducing $\d$ does not necessarily reduce the regret of Multi-UCB as it does for CCB($\e, \d$).

\section{Conclusion} \label{sec:conclusion}
Multi-armed bandits have been previously used to model many networking applications such as channel selection and routing. In many sequential decision making problems in networks such as channel selection, power allocation and server selection, however, agent knows some side-information such as number of packets to be transmitted, transmit power available, features about the job to scheduled. Motivated by these applications, we have considered stochastic contextual bandit problems in this paper. In our formulation, the agent also knows the reward functions, i.e. the relationship between the context and the reward.

For the case of discrete and finite context spaces, we have proposed a UCB-like algorithm called DCB($\e$). It exploits the knowledge of reward functions for updating reward information for all contexts. We proposed a novel proof technique to bound the number of non-optimal pulls of the optimal arms by constant. This helped us obtain a regret bound that grows logarithmically in time and linearly in the number of non-optimal arms. This regret is shown to be order optimal by a natural extension of the lower bound result for standard multi-armed bandits. This regret performance is an improvement over bandit algorithms unaware of reward functions where regret grows linearly with the number of arms. For the proposed DCB($\e$) policy, the non-optimal pulls of the optimal arms are shown to be bounded for $\e >0$. It remains to be seen if such a guarantee can be provided for $\e = 0$. While proving the regret results for DCB($\e, \d$), we also proved a high probability bound on the number of optimal pulls by UCB1 in the standard MAB setting. This result could independently have potential applications in other bandit problems.

Further contributions involve extending DCB($\e$) to continuous context spaces. We proposed an algorithm called CCB($\e, \d$) for Lipschitz reward functions that uses DCB($\e$) as a subroutine. Regret analysis of CCB($\e, \d$) uncovered an interesting regret vs storage trade-off parameterized by $\d$ where the regret can be reduced by using larger storage. System designers can obtain sub-linear regrets by tuning $\d$ based on the time horizon and storage requirements. Even when the time horizon is unknown, similar performance is guaranteed by the proposed epoch-based implementation of CCB($\e, \d$). The joint learning of arm-rewards in CCB($\e, \d$) yields regret bounds that are unachievable by any of the existing contextual bandit algorithms for continuous context spaces. 

In the current setting, we assumed no queuing of data packets or the harvested energy. When there exist data queues or batteries at the transmitter, the agent can decide to send some of those data-bits or use some of that energy in the current slot and potentially store the rest for later slots. Such a setting with this additional layer of decision making is a non-trivial extension that warrants further investigation. 

\bibliography{MAB_list}{}
\bibliographystyle{ieeetr}

\appendices

\section{Sum of Bounded Random Variables} \label{apx:inequalities}
We use following version of Hoeffding's inequality from \cite{hoeffding1963}.

\begin{lemma}[Hoeffding's Inequality] \label{fact:hoeffding}
Let $Y_{1}, ..., Y_{n}$ be i.i.d. random variables with mean $\m$ and range $[0,1]$. Let $S_{n} = \sum\limits_{t=1}^{n} Y_{t}$. Then for all $\a \geq 0$
\begin{align}
\pr \{ S_{n} \geq n \m + \a \} & \leq \me^{-2\a^{2}/n}  \nn \\
\pr \{ S_{n} \leq n \m - \a \} & \leq \me^{-2\a^{2}/n} \nn.
\end{align}
\end{lemma}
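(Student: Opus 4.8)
The plan is to establish the inequality by the classical Chernoff-bounding argument through the moment generating function, deriving only the upper-tail bound and recovering the lower tail by symmetry. First I would fix a parameter $s > 0$ and, using that $z \mapsto \me^{sz}$ is increasing, apply Markov's inequality to the exponentiated deviation:
\begin{equation}
\pr \{ S_{n} \geq n\m + \a \} = \pr \{ \me^{s(S_{n} - n\m)} \geq \me^{s\a} \} \leq \me^{-s\a} \, \bbE \left[ \me^{s(S_{n} - n\m)} \right]. \nn
\end{equation}
Since the $Y_{t}$ are i.i.d., the expectation factorizes as $\bbE[\me^{s(S_{n}-n\m)}] = \prod_{t=1}^{n} \bbE[\me^{s(Y_{t}-\m)}]$, so the whole problem reduces to bounding the moment generating function of a single centered, bounded variable $Y_{t} - \m$.

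The crux is the auxiliary estimate (Hoeffding's lemma): for any random variable $Z$ with $\bbE[Z] = 0$ and $Z \in [a,b]$ almost surely, one has $\bbE[\me^{sZ}] \leq \me^{s^{2}(b-a)^{2}/8}$. I would prove this by exploiting convexity of $z \mapsto \me^{sz}$ on $[a,b]$, writing $\me^{sZ} \leq \frac{b-Z}{b-a}\me^{sa} + \frac{Z-a}{b-a}\me^{sb}$, taking expectations, and using $\bbE[Z]=0$ to obtain $\bbE[\me^{sZ}] \leq \frac{b}{b-a}\me^{sa} - \frac{a}{b-a}\me^{sb}$. Letting $\varphi(s)$ denote the logarithm of the right-hand side and expanding by Taylor's theorem, I would verify $\varphi(0) = \varphi'(0) = 0$ and $\varphi''(s) \leq (b-a)^{2}/4$ for all $s$, which gives $\varphi(s) \leq s^{2}(b-a)^{2}/8$. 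Applying this with $Z = Y_{t} - \m$, whose range has length $b-a = 1$, yields $\bbE[\me^{s(Y_{t}-\m)}] \leq \me^{s^{2}/8}$ and hence $\bbE[\me^{s(S_{n}-n\m)}] \leq \me^{ns^{2}/8}$.

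Combining these bounds gives $\pr\{ S_{n} \geq n\m + \a \} \leq \me^{-s\a + ns^{2}/8}$ for every $s > 0$, and the last step is to minimize the exponent over $s$. The minimizer is $s = 4\a/n$, at which the exponent equals $-2\a^{2}/n$, producing the claimed bound $\me^{-2\a^{2}/n}$. The lower-tail inequality then follows by applying the upper-tail result to the variables $1 - Y_{t}$ (equivalently $-Y_{t}$), which are again i.i.d. with range $[0,1]$.

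I expect the main obstacle to be the proof of Hoeffding's lemma, and within it the bound $\varphi''(s) \leq (b-a)^{2}/4$: the cleanest route is to recognize $\varphi''(s)$ as the variance of a random variable supported in $[a,b]$ and to bound that variance by $(b-a)^{2}/4$ via a Popoviciu-type variance inequality. The Markov/Chernoff step and the one-dimensional optimization over $s$ are routine once this moment-generating-function estimate is in hand. Since the statement is quoted from \cite{hoeffding1963}, simply invoking that reference is an equally valid alternative to reproducing the argument.
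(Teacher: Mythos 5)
Your proof is correct: the Chernoff--Markov step, Hoeffding's lemma via convexity and the bound $\varphi''(s) \leq (b-a)^{2}/4$, the optimization at $s = 4\a/n$, and the reduction of the lower tail to the upper tail via $-Y_{t}$ are all standard and carried out accurately. The paper itself gives no proof --- it states the lemma as a quoted fact from \cite{hoeffding1963} --- and your argument is precisely the classical one from that reference, so, as you note yourself, either reproducing it or simply citing the source suffices here.
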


\section{Proof of Theorem \ref{thm:non_opt_T_UB}: Bound for Non-Optimal Arms} \label{apx:non_opt_arm}
For any $j \in \overline{\sO}$, we write 
\begin{align}
T_{j}(n) &= 1 + \sum_{t=K+1}^{n} \bbI \{ \sA(t) = j \} \nn \\
& \leq l + \sum_{t=K+1}^{n} \bbI \{ \sA(t) = j , T_{j}(t-1) \geq l\},
\end{align}
where $\bbI(x)$ is the indicator function defined to be $1$ when the predicate $x$ is true and $0$ otherwise, and $l$ is an arbitrary positive integer. Let $C_{t,s} = \sqrt{\frac{(2+\e)\ln t}{s}}$. Let $\hat{\th}_{i,j,s}$ denote $\hat{\th}_{i,j}$ when $j$-th arm has been pulled $s$ times. We use a subscript $^{*}$ to refer to optimal arms' statistics. For example we write $\hat{\th}_{i,s}^{*}$ and $T_{i}^{*}(t-1)$ to respectively denote $\hat{\th}_{i,j,s}$ and $T_{j}(t-1)$ for the optimal arm: $j = h^{*}(\bfy_{(i)})$. Our idea is to upper bound the probability of the indicator function of the event $\{ \sA(t) = j , T_{j}(t-1) \geq l \}$ and bound the number of pulls as
\begin{align}
\bbE \left[ T_{j}(n) \right] & \leq l + \sum_{t=K+1}^{n} \pr \{ \sA(t) = j , T_{j}(t-1) \geq l\} \nn \\
& = l + \sum_{t=K+1}^{n} \sum_{i = 1}^{M} \pr \{ \sA(t) = j , T_{j}(t-1) \geq l, \bfy_{t} = \bfy_{(i)}\} \nn \\
& = l + \sum_{i = 1}^{M} p_{i} \sum_{t=K+1}^{n}  \pr \{ \sA(t) = j , T_{j}(t-1) \geq l \mid \bfy_{t} = \bfy_{(i)}\} \nn \\
& \leq l + \sum_{i = 1}^{M} p_{i} \sum_{t=K+1}^{n}  \pr \left\{ \hat{\th}_{i,T_{i}^{*}(t-1)}^{*} + G_{i} C_{t-1,T_{i}^{*}(t-1)} \leq \hat{\th}_{i,j,T_{j}(t-1)} + G_{i} C_{t-1,T_{j}(t-1)} , T_{j}(t-1) \geq l \right\} \nn \\
& \leq l + \sum_{i = 1}^{M} p_{i} \sum_{t=K+1}^{n}  \pr \left\{ \min\limits_{1\leq s < t} \left( \hat{\th}_{i,s}^{*} + G_{i} C_{t-1,s} \right) \leq \max\limits_{l \leq s_{j} < t} \left( \hat{\th}_{i,j,s_{j}} + G_{i} C_{t-1,s_{j}} \right) \right\} \nn \\
& \leq l + \sum_{i = 1}^{M} p_{i} \sum_{t=1}^{\infty} \sum_{s = 1}^{t-1} \sum_{s_{j}=l}^{t-1} \pr \left\{ \hat{\th}_{i,s}^{*} + G_{i} C_{t,s} \leq \hat{\th}_{i,j,s_{j}} + G_{i} C_{t,s_{j}} \right\}. \label{eq:thm1_part1}
\end{align}
Observing that $\hat{\th}_{i,s}^{*} + G_{i} C_{t,s} \leq \hat{\th}_{i,j,s_{j}} + G_{i} C_{t,s_{j}}$ cannot hold, unless at least one of the following conditions hold
\begin{equation}
\hat{\th}_{i,s}^{*} \leq   \th_{i}^{*} - G_{i} C_{t,s}, \label{eq:thm1_cond1} 
\end{equation}
\begin{equation}
\hat{\th}_{i,j,s_{j}} \geq  \th_{i,j}  + G_{i} C_{t,s_{j}}, \label{eq:thm1_cond2} 
\end{equation}
\begin{equation}
 \th_{i}^{*}  \leq  \th_{i,j} + 2 G_{i} C_{t,s_{j}}. \label{eq:thm1_cond3} 
\end{equation}
Using Hoeffding's inequality (see appendix \ref{apx:inequalities}) on (\ref{eq:thm1_cond1}) and (\ref{eq:thm1_cond2}), we get
\begin{equation}
\pr \{ \hat{\th}_{i,s}^{*} \leq   \th_{i}^{*} - G_{i} C_{t,s} \} \leq \me^{-2(2+\e) \ln t} = t^{-2(2+\e)}, \nn
\end{equation}
\begin{equation}
\pr \{\hat{\th}_{i,j,s_{j}} \geq  \th_{i,j}  + G_{i} C_{t,s_{j}} \} \leq \me^{-2(2+\e) \ln t} = t^{-2(2+\e)}. \nn
\end{equation}

For $s_{j} \geq l = \left\lceil \frac{4(2+\e) \ln n}{\min\limits_{1\leq i \leq M} \left( \D_{j}^{(i)} \right)^{2}} \right\rceil $, we get
\begin{align}
\th_{i,j^{*}_{i}} - \th_{i,j} - 2 G_{i} C_{t,s_{j}} & = \th_{i}^{*} - \th_{i,j} - 2 G_{i} \sqrt{\frac{(2+\e)\ln t}{s_{j}}} \nn \\
&\geq \D_{j}^{(i)} - \min\limits_{1\leq i \leq M} \D_{j}^{(i)} \nn \\
&\geq 0. \nn
\end{align}
Hence, the condition (\ref{eq:thm1_part1}) reduces to 
\begin{align}
\bbE \left[ T_{j}(n) \right] & \leq \left\lceil \frac{4(2+\e) \ln n}{\min\limits_{1\leq i \leq M} \left( \D_{j}^{(i)} \right)^{2}} \right\rceil + \sum_{i = 1}^{M} p_{i} \sum_{t=1}^{\infty} \sum_{s = 1}^{t-1} \sum_{s_{j}=l}^{t-1} \left( \pr \left\{ \hat{\th}_{i,s}^{*} \leq   \th_{i}^{*} - G_{i} C_{t,s} \right\} +  \pr \left\{ \hat{\th}_{i,j,s_{j}} \geq  \th_{i,j}  + G_{i} C_{t,s_{j}} \right\} \right) \nn \\
& \leq \left\lceil \frac{4(2+\e) \ln n}{\min\limits_{1\leq i \leq M} \left( \D_{j}^{(i)} \right)^{2}} \right\rceil + \sum_{i = 1}^{M} p_{i} \sum_{t=1}^{\infty} \sum_{s = 1}^{t} \sum_{s_{j}=l}^{t}  2 t^{-2(2+\e)} \nn \\
& \leq \left\lceil \frac{4(2+\e) \ln n}{\min\limits_{1\leq i \leq M} \left( \D_{j}^{(i)} \right)^{2}} \right\rceil + 2 \sum_{i = 1}^{M} p_{i} \sum_{t=1}^{\infty} t^{-2(1+\e)} \nn \\
& \leq \left\lceil \frac{4(2+\e) \ln n}{\min\limits_{1\leq i \leq M} \left( \D_{j}^{(i)} \right)^{2}} \right\rceil + \frac{\pi^{2}}{3} \sum_{i = 1}^{M} p_{i} \nn \\
& \leq \frac{4(2+\e) \ln n}{\min\limits_{1\leq i \leq M} \left( \D_{j}^{(i)} \right)^{2}} + 1+ \frac{\pi^{2}}{3}, \nn
\end{align}
which concludes the proof.
$\QED$

\section{Proof of Lemma \ref{lem:high_prob_ucb}: High probability bound for UCB1($\e$)} \label{apx:high_prob_ucb}
Let $\bm{\pi}$ denote the instance of UCB1($\e$) in the standard MAB problem and $j^{*}$ the index of optimal arm. UCB1($\e$) stores the empirical means of arm-rewards. We use $\hat{X}_{j,s}$ and $\hat{X}_{s}^{*}$ to respectively denote the averages of arm rewards for $j$-th arm and the optimal arm when they have been pulled for $s$ trials. 

If the optimal arm gets pulled less than $\frac{n}{K}$ times during the first $n$ trials, then according to the pigeonhole principle there must exist an arm that gets pulled more than $\frac{n}{K}$ times. 
\begin{align}
\pr \left\{ T^{*}(n) < \frac{n}{K} \right\} &\leq \pr \left\{\exists j \in \sS \backslash \{j^{*}\} : T_{j}(n) > \frac{n}{K} \right\} \nn \\
& \leq \sum_{j\in \sS \backslash \{j^{*}\}} \pr \left\{ T_{j}(n) > \frac{n}{K} \right\}. \label{eq:lem_bound1}
\end{align}
For the $j$-th arm, if $T_{j}(n) \geq \left\lfloor \frac{n}{K} \right\rfloor + 1$, then at some $\left\lfloor \frac{n}{K} \right\rfloor + 1 \leq t \leq n$ it must have been pulled for the ($\left\lfloor \frac{n}{K} \right\rfloor + 1$)-th time. We track this trial-index $t$ and bound the probabilities for non-optimal arms.
\begin{align}
\pr \left\{ T_{j}(n) > \frac{n}{K} \right\} &\leq \sum_{t = \left\lfloor \frac{n}{K}\right\rfloor +1}^{n} \pr \left\{ \bm{\pi}(t) = j, T_{j}(t-1) = \left\lfloor \frac{n}{K}\right\rfloor \right\} \nn \\
&\leq \sum_{t = \left\lfloor \frac{n}{K}\right\rfloor +1}^{n} \pr \left\{ \hat{X}^{*}_{t^{*}(t-1)} + C_{t-1, T^{*}(t-1)} \leq \hat{X}_{j,\left\lfloor \frac{n}{K}\right\rfloor} + C_{t-1, \left\lfloor \frac{n}{K}\right\rfloor}  \right\} \nn \\
&\leq \sum_{t = \left\lfloor \frac{n}{K}\right\rfloor +1}^{n} \pr \left\{\min_{1 \leq s < t} \hat{X}^{*}_{s} + C_{t-1, s} \leq \hat{X}_{j,\left\lfloor \frac{n}{K}\right\rfloor} + C_{t-1, \left\lfloor \frac{n}{K}\right\rfloor}  \right\} \nn \\
&\leq \sum_{t = \left\lfloor \frac{n}{K}\right\rfloor +1}^{n} \sum_{s =1}^{t-1} \pr \left\{\hat{X}^{*}_{s} + C_{t-1, s} \leq \hat{X}_{j,\left\lfloor \frac{n}{K}\right\rfloor} + C_{t-1, \left\lfloor \frac{n}{K}\right\rfloor}  \right\} . \label{eq:lem_1_arm}
\end{align}
Note that $\hat{X}^{*}_{s} + C_{t-1, s} \leq \hat{X}_{j,\left\lfloor \frac{n}{K}\right\rfloor} + C_{t-1, \left\lfloor \frac{n}{K}\right\rfloor}$ implies that at least one of the following must hold
\begin{align}
\hat{X}^{*}_{s} &\leq \m^{*} - C_{t,s} \label{eq:lem_cond1} \\
\hat{X}_{j,\left\lfloor \frac{n}{K}\right\rfloor} &\geq \m_{j} - C_{t,\left\lfloor \frac{n}{K}\right\rfloor} \label{eq:lem_cond2} \\
\m^{*} &< \m_{j} + 2C_{t,\left\lfloor \frac{n}{K}\right\rfloor} \label{eq:lem_cond3}
\end{align}
Using Hoeffding's inequality we bound the probability of events (\ref{eq:lem_cond1}) and (\ref{eq:lem_cond2}) as
\begin{equation}
\pr \left\{ \hat{X}^{*}_{s} \leq \m^{*} - C_{t,s} \right\} \leq \me^{-2(2+\e)\ln t} = t^{-2(2+\e)}, \nn
\end{equation}
\begin{equation}
\pr \left\{ \hat{X}_{j,\left\lfloor \frac{n}{K}\right\rfloor} \geq \m_{j} - C_{t,\left\lfloor \frac{n}{K}\right\rfloor} \right\} \leq \me^{-2(2+\e)\ln t} = t^{-2(2+\e)}. \nn
\end{equation}
As stated in (\ref{eq:cond_ucb}) for $\left\lfloor \frac{n}{K}\right\rfloor > \frac{4(2+\e) \ln n}{\min\limits_{j \neq j^{*}} (\m^{*} - \m_j)^{2}}$, 
\begin{align}
\m^{*} - \m_{j} - 2C_{t,\left\lfloor \frac{n}{K}\right\rfloor} &= \m^{*} - \m_{j} - 2\sqrt{\frac{(2+\e)\ln t}{\left\lfloor \frac{n}{K}\right\rfloor}} \nn \\
&> \m^{*} - \m_{j} - 2\sqrt{\frac{(2+\e)\ln n}{\left\lfloor \frac{n}{K}\right\rfloor}} \nn \\
&> \m^{*} - \m_{j} - \min\limits_{1\leq j \leq K} (\m^{*} - \m_j) \nn \\
&\geq 0.
\end{align}
Thus, condition (\ref{eq:lem_cond3}) is always false when $\left\lfloor \frac{n}{K}\right\rfloor > \frac{4(2+\e) \ln n}{\min\limits_{j \neq j^{*}} (\m^{*} - \m_j)^{2}}$.

Using union bound on (\ref{eq:lem_1_arm}) for every non-optimal arm, we get
\begin{align}
\pr \left\{ T_{j}(n) > \frac{n}{K} \right\} &\leq \sum_{t = \left\lfloor \frac{n}{K}\right\rfloor +1}^{n} \sum_{s =1}^{t-1} \left(\pr \left\{ \hat{X}^{*}_{s} \leq \m^{*} - C_{t,s} \right\}+  \pr \left\{ \hat{X}_{j,\left\lfloor \frac{n}{K}\right\rfloor} \geq \m_{j} - C_{t,\left\lfloor \frac{n}{K}\right\rfloor} \right\} \right) \nn \\
&\leq  \sum_{t = \left\lfloor \frac{n}{K}\right\rfloor +1}^{n} \sum_{s =1}^{t-1} 2 t^{-2(2+\e)} \nn \\
&\leq  2\sum_{t = \left\lfloor \frac{n}{K}\right\rfloor +1}^{n}  t^{-3-2\e} \nn \\
&<  2\sum_{t = \left\lfloor \frac{n}{K}\right\rfloor +1}^{n}  \left(\frac{n}{K}\right)^{-3-2\e} \nn \\
&<  2n \left(\frac{n}{K}\right)^{-3-2\e} \nn \\
& = \frac{2 K^{3 + 2\e}}{n^{2+2\e}}. \label{eq:lem_1_arm_b}
\end{align}
Substituting this in (\ref{eq:lem_bound1}), for all $n$ satisfying condition (\ref{eq:cond_ucb}), we get
\begin{align}
\pr \left\{ T^{*}(n) < \frac{n}{K} \right\} &< \sum_{j \in \sS \backslash \{ j^{*}\} } \frac{2 K^{3 + 2\e}}{n^{2+2\e}} \nn \\
& < \frac{2K^{4+2\e}}{n^{2+2\e}}. \nn
\end{align}
 $\QED$

\section{Proof of Theorem \ref{thm:opt_T_UB}: Bound for Optimal Arms} \label{apx:opt_arm_bound}
Fix $j \in \sO$. Let $\sE_{j,t}$ denote an event where $j$-th arm is pulled non-optimally at $t$-th trial. Total number of non-optimal pulls can, therefore, be written as $T_{j}^{N}(n) = \sum\limits_{t=1}^{n} \bbI \{ \sE_{j,t} \}$. Let $\sE_{j,t}^{1}$ denote an event of at least one of the contexts $\bfy_{(i)} \in \sY_{j}$ not having occurred even half the number of its expected occurrences till the $t$-th trial. Additionally, let $\sE_{j,t}^{2}$ be the event of $j$-arm not having been pulled at least $\frac{1}{K}$ fraction of such pulls by the optimal hypothesis till the $t$-th trial. In terms of these events, we write the following bound on expected number of non-optimal pulls
\begin{align}
    \bbE [T_{j}^{N}(n)] & = \sum\limits_{t=1}^{n} \pr \{ \sE_{j,t} \} \nn \\
    & \leq \sum\limits_{t=1}^{n} \left( \pr \{ \sE_{j,t}^{1} \} + \pr \{ \sE_{j,t}^{2} \cap \overline{\sE_{j,t}^{1}} \} + \pr \{ \sE_{j,t} \cap \overline{\sE_{j,t}^{1}} \cap \overline{\sE_{j,t}^{2}} \} \right) \label{eq:non_opt_pulls1}.
\end{align}

\paragraph{Under-realized contexts} Let $N_{i}(n)$ denote the number occurrences of $\bfy_{(i)}$ till the $n$-th trial. Thus, $\sE_{j,t}^{1}$ corresponds to the event where $N_{i}(t) \leq p_i \frac{n}{2} $ for at least one context $\bfy_{(i)} \in \sY_{j}$. 

In terms of indicators, $N_{i}(n) = \sum\limits_{t=1}^{n} \bbI \{ \bfy_{t} = \bfy_{(i)}\}$ and $\bbE[N_{i}(n)] = p_i n$. These indicator random variables are i.i.d. and thus, we use Hoeffding's inequality (appendix \ref{apx:inequalities}) with $\a = p_i n/2$ to get
\begin{equation}
\pr \left\{ N_{i}(n) \leq \frac{p_{i}}{2} n \right\} \leq \exp \left\{ - \frac{p_{i}^{2}}{2} n \right\}.
\end{equation}
The exponential bound is important, since it helps us bound the number of occurrences of context under-realization $N(\sE_{j}^{1})$ by a constant. Note that $N(\sE_{j}^{1}) = \sum_{n=1}^{\infty} \bbI\{ \sE_{j,n}^{1} \}$. We obtain a bound on its expectation, which also bounds the first term in equation (\ref{eq:non_opt_pulls1}), as follows
\begin{align}
\bbE \left[ N(\sE_{j}^{1}) \right] & = \sum_{n=1}^{\infty} \pr \{ \sE_{j,n}^{1} \} \nn \\
& \leq \sum_{n=1}^{\infty} \sum\limits_{\bfy_{(i)} \in \sY_{j}} \exp \left\{ - \frac{p_{i}^{2}}{2} n \right\} \tag{Union bound} \\
& \leq \sum\limits_{\bfy_{(i)} \in \sY_{j}} \frac{\exp \left\{ - \frac{p_{i}^{2}}{2} \right\}}{1 - \exp \left\{ - \frac{p_{i}^{2}}{2} \right\}} \nn \\
& \leq \sum\limits_{\bfy_{(i)} \in \sY_{j}} \frac{2}{ p_{i}^{2} }. \label{eq:bound_e1}
\end{align}

\paragraph{Under-exploited arm} We assume that no context $\bfy_{(i)} \in \sY_{j}$ is under-realized and yet $j$-th arm is not pulled in enough number of trials. For these contexts $j$-th arm is optimal and we don't end up pulling it enough nevertheless. We define an event $\sE_{j}^{2}$ corresponding to the existence of a context $\bfy_{(i)} \in \sY_{j}$ for which the arm is under-exploited. We now bound the probability that for some context $\bfy_{(i)} \in \sY_{j}$, $T_{j,i}^{O}(n) < \frac{N_{i}(n)}{K} n$, where $T_{j,i}^{O}(n)$ denotes the number of pulls of $j$-th arm for context $\bfy_{(i)}$. Here, we use a high probability bound (see lemma \ref{lem:high_prob_ucb}) on the optimal arm for UCB1($\e$) which we prove in appendix \ref{apx:high_prob_ucb}. 

We upper bound the second term from equation (\ref{eq:non_opt_pulls1}) as follows:
\begin{align}
\bbE \{ N(\sE_{j}^{2} \mid \overline{\sE_{j}^{1}}) \} & = \sum_{n=1}^{\infty} \pr \left\{ \sE_{j,n}^{2} \cap \overline{\sE_{j,n}^{1}} \right\} \nn \\
&\overset{(a)}{\leq} n_{o} + \sum_{n =  n_{o}}^{\infty} \sum_{\bfy_{(i)} \in \sY_{i}} \pr \left\{ T_{j,i}^{O}(n) < \frac{N_{i}(n)}{K} , N_{i}(n) \geq \frac{p_{i}n}{2} \right\} \nn \\
&\leq n_{o} + \sum_{n = n_{o}}^{\infty} \sum_{\bfy_{(i)} \in \sY_{i}} \sum_{m = p_{i}n/2}^{n} \pr \left\{ T_{j,i}^{O}(n) < \frac{N_{i}(n)}{K}, N_{i}(n) = m \right\} \nn \\
&= n_{o}+\sum_{\bfy_{(i)} \in \sY_{i}} \sum_{n = n_{o}}^{\infty} \sum_{m = p_{i}n/2}^{n} \pr \left\{ T_{j,i}^{O}(n) < \frac{N_{i}(n)}{K} \middle| N_{i}(n) = m \right\} \pr\{ N_{i}(n) = m \} \nn \\
&\leq n_{o} + \sum_{\bfy_{(i)} \in \sY_{i}} \sum_{n = n_{o}}^{\infty} \sum_{m = p_{i}n/2}^{n} \pr \left\{ T_{j,i}^{O}(n) < \frac{m}{K} \right\} \nn \\
&\overset{(b)}{\leq} n_{o} + \sum_{\bfy_{(i)} \in \sY_{i}} \sum_{n = n_{o}}^{\infty} \sum_{m = p_{i}n/2}^{n} \frac{2K^{4+2\e}}{m^{2+2\e}} \tag{Lemma \ref{lem:high_prob_ucb}} \nn \\
&\leq n_{o} + \sum_{\bfy_{(i)} \in \sY_{i}} \sum_{n = n_{o}}^{\infty} \sum_{m = p_{i}n/2}^{n} \frac{2K^{4+2\e}}{\left(p_{i}n/2 \right)^{2+2\e}} \nn \\
&\leq n_{o} + \sum_{\bfy_{(i)} \in \sY_{i}} \frac{2^{3+2\e}K^{4+2\e}}{\left(p_{i}\right)^{2+2\e}} \sum_{n = n_{o}}^{\infty} \frac{1}{n^{1+2\e}}  \nn \\
&\leq n_{o} + 2^{3+2\e}K^{4+2\e} c_{\e} \sum_{\bfy_{(i)} \in \sY_{i}} \frac{1}{\left(p_{i}\right)^{2+2\e}}   \label{eq:bound_e2},
\end{align}
where $c_{\e} = \sum_{n = 1}^{\infty} \frac{1}{n^{1+2\e}} < \infty$, since we know that the series converges for $\e > 0$. Thus, we need $\e > 0$ in order to provide theoretical performance guarantees. Note that $(a)$ holds as we bound the probabilities for the first $n_{o}$ terms by $1$, and $(b)$ holds since the pulls for one context can only decrease the exploration requirements of other contexts. For $(b)$ to hold, condition of lemma \ref{lem:high_prob_ucb} needs to be satisfied, which is $\Big\lfloor \frac{p_i n}{2 K}\Big\rfloor > \frac{4 (2+\e) \ln n}{\left(\Delta_{j}^{(i)}\right)^{2}}$ for all $n \geq n_{o}$. In order to make $n_{o}$ independent of $i$ and $j$, we choose $n_{o}$ as the minimum value of $n$ satisfying following inequality:
\begin{equation}
    \Big\lfloor \frac{p_{o} n}{2 K} \Big\rfloor > \Big\lceil \frac{4 (2+\e) \ln n}{\left(\Delta_{o}\right)^{2}} \Big\rceil,
\end{equation}
where $p_{o} = \min\limits_{1 \leq i \leq M} p_i$ and $\D_{o} = \min\limits_{\forall i,j : h^{*}(i) \neq j} \D_{j}^{(i)}$.

Note that all terms in the upper bound in (\ref{eq:bound_e2}) are constant, showing that the number of occurrences of under-exploited optimal arms is bounded.

\paragraph{Dominant confidence bounds} This case corresponds to the event where no context is under-realized and the optimal arms are also pulled sufficiently often, yet an optimal arm is pulled non-optimally. This means that $j$-th arm is pulled for a context $\bfy_{(i)}\notin \sY_{j}$ at trial $n$, while $N_{i}(n) \geq \frac{p_{i}}{2} n$ and $T_{j,i}^{O}(n) \geq \frac{N_{i}(n)}{K} n$ for all $\bfy_{(i)}\in \sY_{j}$. Let $q_{j} = \sum\limits_{\bfy_{(i)} \in \sY_{j}} p_{i}$. Note that we only need to analyze $n \geq n_{o}$, since we already upper bound the probabilities of non-optimal pulls by $1$ as shown in previous case. Thus, we have
\begin{align}
T_{j}(n) & \geq \sum_{\bfy_{(i)}\in \sY_{j}} T_{j,i}^{O}(n) \nn \\
& > \sum_{\bfy_{(i)} \in \sY_{j}} \frac{N_{i}(n)}{2} \nn \\
& >  \frac{nq_{j}}{2K}  \nn.
\end{align}
Expected number of events $N(\sE_{j} \cap \overline{\sE_{j}^{1}} \cap \overline{\sE_{j}^{2}})$ gets upper bounded as follows:
\begin{align}
    N(\sE_{j} \cap \overline{\sE_{j}^{1}} \cap \overline{\sE_{j}^{2}}) &= \sum_{n=n_{o}}^{\infty} \pr \{ \sE_{j,n} \cap \overline{\sE_{j,n}^{1}} \cap \overline{\sE_{j,n}^{2}} \} \nn \\
    &= \sum_{n=n_{o}}^{\infty} \sum_{\bfy_{(i)} \notin \sY_{j}} \pr \Big\{ \sA(n) = j, T_{j}(n) \geq \frac{n q_{j}}{2K}, \bfy_{n} = \bfy_{(i)} \Big\} \nn \\
    &\leq \sum_{n=n_{o}}^{\infty} \sum_{\bfy_{(i)} \notin \sY_{j}} p_{i} \pr \left\{ \sA(n) = j, T_{j}(n) \geq \frac{n q_{j}}{2K} \middle| \bfy_{n} = \bfy_{(i)} \right\}. 
\end{align}
Following the line of argument from the proof of theorem \ref{thm:non_opt_T_UB} in appendix \ref{apx:non_opt_arm}, we get
\begin{align}
    N(\sE_{j} \cap \overline{\sE_{j}^{1}} \cap \overline{\sE_{j}^{2}}) & \leq  2 \sum_{\bfy_{(i)} \notin \sY_{j}} p_{i} \sum_{n=n_{o}}^{\infty} n^{-2(1+\e)} \nn \\
    & \leq  \frac{\pi^{2}}{3} \sum_{\bfy_{(i)} \notin \sY_{j}} p_{i}  \nn \\
    & = \frac{\pi^{2}}{3} (1 - q_{j}). \label{eq:bound_e3}
\end{align}
Note that the arguments of theorem \ref{thm:non_opt_T_UB} hold, since we only consider trials $n \geq n_{o}$.

Combining the results form (\ref{eq:bound_e1}), (\ref{eq:bound_e2}) and (\ref{eq:bound_e3}), we get
\begin{equation}
\bbE \left[ T_{j}^{N}(n) \right] \leq n_{o} +  \left( \sum\limits_{\bfy_{(i)} \in \sY_{j}} \frac{2}{p_{i}^{2}} \right) + 2^{3+2\e}K^{4+2\e} c_{\e} \left( \sum_{\bfy_{(i)} \in \sY_{i}} \frac{1}{\left(p_{i}\right)^{2+2\e}} \right)  +  \frac{\pi^{2}}{3} (1 - q_{j}), \nn
\end{equation}
which concludes the proof. $\QED$

\end{document}